\providecommand{\ourparagraph}[1]{\vspace{0.2em}\noindent\textbf{#1. }}
\providecommand{\pmaxq}{Parallel Max-Q }
\providecommand{\se}{SingleExpert }
\providecommand{\bse}{Best SingleExpert}
\theoremstyle{plain}
\newtheorem{thm}{\protect\theoremname}
\theoremstyle{plain}
\newtheorem{lem}{\protect\lemmaname}
\newenvironment{proof}[1][\protect\proofname]{\par
\normalfont\topsep6\p@\@plus6\p@\relax
\trivlist
\itemindent\parindent
\item[\hskip\labelsep\scshape #1]\ignorespaces
}{%
\endtrivlist\@endpefalse
}
\providecommand{\proofname}{Proof}
\theoremstyle{definition}
\newtheorem{defn}{\protect\definitionname}
\theoremstyle{plain}
\newtheorem{cor}{\protect\corollaryname}
\providecommand{\corollaryname}{Corollary}
\providecommand{\definitionname}{Definition}
\providecommand{\lemmaname}{Lemma}
\providecommand{\theoremname}{Theorem}
\providecommand{\citet}[1]{\citeauthor{#1}~(\citeyear{#1})}
\theoremstyle{plain}
\newtheorem{theorem}{Theorem}
\begin{document}

\title{Scaling POMDPs For Selecting Sellers in E-markets---Extended Version}
\author{Athirai A. Irissappane\\
Nanyang Technological University\\
 Singapore\\
athirai001@e.ntu.edu.sg\\
\And
Frans A.\ Oliehoek\\
University of Amsterdam\\
 University of Liverpool\\
frans.oliehoek@liverpool.ac.uk\\
\And
Jie Zhang\\
Nanyang Technological University\\
 Singapore\\
zhangj@ntu.edu.sg\\
}
\maketitle

\begin{abstract}
In multiagent e-marketplaces, buying agents need to select good sellers by querying other buyers (called advisors).
Partially Observable Markov Decision Processes (POMDPs) have shown to be an effective framework for optimally selecting sellers by selectively querying advisors.
However, 
current solution methods do not scale to hundreds or even tens of agents operating in the e-market. In this paper, we propose the Mixture of POMDP Experts (MOPE) technique, which exploits the inherent structure of trust-based domains, such as the seller selection problem in e-markets, by aggregating the solutions of smaller sub-POMDPs. 
We propose a number of variants of the MOPE approach that we analyze theoretically and empirically.
Experiments show that MOPE can scale up to a hundred agents thereby leveraging the presence of more advisors to significantly improve buyer satisfaction. 
\end{abstract}

\section{Introduction}\label{introduction}

In many domains, agents need to determine the trustworthiness (quality) of other agents before interacting with them. Specifically, in e-marketplaces, buying agents need to reason about the quality of sellers and determine which sellers to do business with (referred to as the \emph{seller selection problem}). When buyers have no previous experience with sellers, they can obtain advice by querying other buyers (called \emph{advisors}). However, some advisors may be untrustworthy and provide misleading opinions to promote or demote the sellers~\cite{irissappane2015filtering}.

The Partially Observable Markov Decision Process (POMDP) is a framework for sequential decision making under uncertainty, suitable for e-markets, where buyers often need to make decisions with limited information about the sellers and advisors. \citet{regan2005advisor} propose the Advisor POMDP, for the seller selection problem, which, rather than trying to achieve the most accurate estimate of sellers, tries to select good sellers optimally with respect to its belief. Seller and Advisor Selection (SALE) POMDP~\cite{irissappane2014pomdp} extends Advisor POMDP to additionally deal with trust propagation, by introducing queries about advisors.
The SALE POMDP formalism
enables maximizing buyer satisfaction by optimally trading off information
gaining (querying advisors) and exploiting (selecting a seller) actions, and
experiments have shown very good results in practice. Also, the approach is
easily generalizable to deal with more general problems with trust-propagation
components, such as routing in Wireless Sensor
Networks (WSNs)~\cite{irissappanesecure}.

Unfortunately, these POMDP approaches suffer from scalability issues
. Finding
optimal policies for POMDPs is, in general, computationally intractable (PSPACE
complete) and POMDP solvers computing exact solutions, e.g., value iteration do not scale to more than a handful of states~\cite{cassandra1994acting}. 
While approximation algorithms have been shown to supply good policies rapidly 
even for problems with very large state spaces~\cite{spanpomdp,silver2010monte},
the scalability of the SALE POMDP, which is based on one such method~\cite{poupart2005exploiting}, is limited to about $10$ agents
(sellers and advisors). For larger number of agents, the solution time grows to the
order of hours and solution quality degenerates, precluding the SALE POMDP from
exploiting the presence of more sellers and advisors.


This paper proposes 
a novel method, referred to as the \emph{Mixture of POMDP Experts (MOPE)}
approach, for dealing with very large trust-propagation problems such as SALE
POMDPs with many sellers and advisors.
The key idea is to divide the large seller selection POMDP problem into a
multitude of computationally tractable smaller \emph{(sub)-POMDPs}, each
containing a subset of sellers and advisors. The actions of the
\emph{sub-POMDPs} (SPs) are then aggregated, to find the best action in the
process of selecting a good seller.

The MOPE approach exploits the structure of the Dynamic Bayesian Network 
that represents the transition and observation probabilities of the SALE POMDP:
query actions do not affect the actual states but only the agent's
\emph{beliefs} over the state factors, 
making it easier to decompose a large seller selection problem into smaller sub-problems that approximate the
larger problem.
Due to the improved scalability of MOPE, it can leverage the
presence of more advisors to make more informed decisions about sellers,
when the size of the seller selection problem increases.
Extensive evaluation in
a simulated e-marketplace demonstrates that MOPE can scale up to a hundred
agents (millions of states and thousands of actions), outperforming
the state-of-the-art POMCP~\cite{silver2010monte} approach, while using less computation time.
%
We also demonstrate that MOPE can bring scalability to other domains
by showing results for wireless sensor networks with up to $40$ neighboring nodes.

\section{Background}\label{sec:single-pomdp}

This paper mainly relies on POMDPs,
which can be used to represent decision making problems under uncertainty in terms of
states, actions, transitions, observations and rewards. We refer to Kaelbling et al.~\shortcite{kaelbling1998planning}, Spaan~\shortcite{spanpomdp} for a comprehensive
introduction to POMDPs. Here, we try to convey the most basic intuitions by
briefly describing the Seller and Advisor Selection (SALE) POMDP~\cite{irissappane2014pomdp}, which is the main application for the technique
we propose in this paper.

\ourparagraph{States}Each state is represented using a number of state factors, such as the quality levels of each seller ($q_j\in\{high, low\}$), each advisor ($u_i\in\{trustworthy, untrustworthy\}$) and status of the transaction ($sat\in\{not\_started$, $satisfactory$, $unsatisfactory$, $gave\_up$, $finished\}$).

\ourparagraph{Actions and Transitions}For query actions such as $seller\_query_{(i,j)}$ ($(SQ)_{(i,j)}$), i.e., ask advisor $i$ about seller $j$ and $advisor\_query_{(i,i')}$ ($(AQ)_{(i,i')}$), i.e., ask advisor $i$ about another advisor $i'$, the states do not change. For $BUY_{j}$ action, to buy from seller $j$, the state transitions to successful ($sat$ = $satisfactory$) on buying from a good seller and unsuccessful ($sat$ = $unsatisfactory$) on buying from a bad seller. For $do\_not\_buy$ ($DNB$) action, i.e., do not buy from any seller, the state transitions to $sat$ = $gave\_up$.

\ourparagraph{Rewards}
There is small cost for the query actions. A reward/penalty is associated with a successful/unsuccessful transaction. There is a penalty for taking $DNB$ action, when there is a seller of high quality, otherwise a reward is given.

\ourparagraph{Observations} After $SQ_{ij}$, $AQ_{ii'}$ actions, an observation $o\in \{good, bad\}$, corresponding to the quality of seller~$j$ and $o\in \{trustworthy, untrustworthy\}$ corresponding to the quality of advisor~$i'$ is received, respectively. After $BUY_j$ action, the agent can also receive an observation based on the actual quality of seller~$j$, allowing to reuse the updated beliefs, in case of multiple transactions. The observation probabilities are such that trustworthy advisors give more accurate and consistent answers than untrustworthy ones.

\providecommand{\belS}{B}

The SALE POMDP agent interacts with its environment for an indefinite number of
time steps and we model the problem using an infinite horizon.
During this interaction, the agent maintains a \emph{belief}
$b \in \belS$, i.e., a probability distribution over states. If $b(s)$ specifies the probability of $s$ (for all $s$), we can derive $b'$ an updated belief after taking some
action $a$ and receiving an observation $o$ using the Bayes' rule.
%
A POMDP policy $\pi: \belS\rightarrow\mathcal{A}$, maps
belief $b \in \belS$ to an action $a \in \mathcal{A}$. A
policy $\pi$ is associated with a value function $V(b)$,
specifying the expected total reward of executing policy $\pi$
starting from $b$, with discount factor $\gamma$. The main objective of the POMDP agent is to find an optimal policy $\pi^*$, which maximizes $V(b)$ (Eqn.~\ref{eq:valuefun}). The value function can also be represented in terms of Q-functions, given by Eqn.~\ref{eq:qfun}, where, $b^a_o$ is the belief state resulting from $b$ after taking action $a$ and receiving observation $o \in \mathcal{O}$.
\begin{equation}
\hspace{-0.5mm}V^*(b)\hspace{-1mm}=\hspace{-1mm}\max_{\pi} \mathbb{E}\Big[\sum_{t}\gamma^t R(s,a,s')\hspace{-1mm} \mid \hspace{-1mm}\pi,b\Big]\hspace{-1mm}=\hspace{-1mm}\max_{a\in A} Q^*(b,a)
\label{eq:valuefun}
\end{equation}
\begin{equation}
Q^*(b,a)=\sum_{s\in S}b(s)R(s,a) + \gamma \sum_{o \in \Omega} p(o|b,a) V^*(b^a_o)
\label{eq:qfun}
\end{equation}

By computing the optimal value function, we can optimize \emph{long-term} rewards by picking maximizing actions. This stands in contrast to \emph{myopic} approaches that maximize the immediate rewards $R$. Such approaches are inherently unsuitable for seller selection: in order to correctly value the different query actions, one needs to reason about their impact on the future beliefs and the associated value of information. In order to actually compute $V^*$ (approximately) one could rely on state-of-the-art flat solvers such as SARSOP~\cite{Kurniawati08RSS}, but these do provide very limited scalability~\cite{Oliehoek12TRUST}, since the number of states grows exponentially with the number of agents $n$ (i.e., sellers and advisors). Therefore, \citet{irissappane2014pomdp} employ a solution method,  factored Perseus~\cite{poupart2005exploiting}, that exploits the factored representation of this domain, thus allowing to scale to roughly 10 agents. Beyond that solution times go up significantly while solution quality drops. Apart from the number of state factors themselves, a difficulty is that the number of actions grows with order $O(n^2)$ as the query actions involve pairs of agents.

\section{A SingleExpert Baseline}
\label{sec:se}


In this paper, we propose techniques to exploit the structure
present in (settings like) the SALE POMDP. Here, we introduce a baseline algorithm as an intuitive starting point for the more advanced method we propose in the next section.

This baseline, called SingleExpert, is basically a method to apply the SALE POMDP for large problems. That is, when faced with a SALE POMDP instance with many
sellers and advisors, we can randomly select a subset of agents that is small enough to model and
solve as a SALE POMDP and use the resulting policy. Since the $q_j$ and $u_i$ variables do not
influence each other, defining such a \emph{sub-POMDP (SP)} is trivial as it merely amounts to
deleting all non-selected state variables as well as actions and observations that pertain to them.
Also, the resulting model is a small SALE POMDP, thus we can find a good solution for it.
While this voluntary restriction on the set of sellers and advisors that will be
reasoned about could be limiting,
it is quite possible that it may lead to acceptable performance and it might be better
than incorrectly reasoning about all of the agents.
We call this approach the `SingleExpert' approach, since the randomly selected
SP acts as a (single) expert as to what action to take in the larger problem.

\section{Mixture of POMDP Experts (MOPE)}\label{sec:moe}
While we argue that SingleExpert might have its merit, clearly, we want to develop methods that can
exploit large pools of potential sellers and advisors. To accomplish this, we introduce the
\emph{Mixture of POMDP Experts (MOPE)} framework.
SingleExpert exploits a particular property of trust propagation-like domains: constructing
an SP is possible because the state variables encoding seller and advisor qualities do not
affect each other and cannot be influenced by actions.
In fact, interaction of these variables only arises in the agent's beliefs manifested as
correlations induced by the coupling via observations. For example, if we query advisor~$i$ about
seller~$j$ and receive observation $bad$, it not only increases the probability of the seller being
low quality ($q_j=low$) and advisor being $u_i=trustworthy$, but also of
$(q_j=high,u_i=untrustworthy)$. The MOPE framework aims to take this insight further by approximating such
correlations using smaller clusters of variables, as in variational inference
approaches~\cite{koller2009probabilistic}, leading to the idea of representing the larger problem
using a number of smaller SPs and leveraging their solutions.
That is, rather than considering a single expert, we will want to consider
many SPs.


\subsection{MOPE Algorithm Overview}\label{subsec:overview}
Algorithm~\ref{alg:pomdpalg1} gives a brief overview of the MOPE framework. We first form the SPs by randomly selecting a subset of sellers and advisors ($M_k$ in Line $1$). Each SP is solved to obtain the optimal policy and thereby its maximum expected total reward $V_k^*$ (Line $2$)\footnote{
In practice, we may not solve the SPs optimally, and use the best policy and accompanying value function that we could find.}.
When SPs have the same agent composition (number of sellers and advisors), the
found $V_k^*$ can be reused amongst them.
Therefore, in our implementation we always select such uniformly composed
SPs.
We define $\mathcal{V}$ as a set of votes $v$ collected from each SP. Each vote $v=(a,q)$ is a set containing the action $a$ suggested by the SP and its associated Q-value $q$.

To maintain beliefs about all the state factors, it is possible to maintain the local beliefs in each SP, in parallel. However, doing so: 1) we need to deal with the actions not present in a SP as its local belief will be updated only if the SP contains the executed action $\bar{a}$; 2) we cannot properly take into account the influence of state factors not modeled in the SP on the belief, which may lead to inconsistent beliefs in different SPs. Thus, we propose to maintain and update the beliefs $b\in \belS$ at the global level, i.e., involving all state factors.

At each time step, for each SP, we first extract its current local belief $b_k$ (Line $6$) from the global belief $b$. Based on $b_k$, we obtain its vote $v$, i.e, its recommended action $a$ and the associated $q$ value (Line $7$). The overall best action $\bar{a}$ is obtained (Line $8$) by aggregating all the votes $v \in \mathcal{V}$. Action $\bar{a}$ is then executed (Line $9$) and an observation $o$ is received (Line $10$), based on which the global beliefs are updated (Line $11$). The following subsections give a more detailed description of the techniques used in the framework.

\begin{algorithm}[tb]
\label{alg:pomdpalg1}
\SetKwInOut{Input}{Input}
\SetKwInOut{Output}{Output}
\SetKwInOut{Define}{Define}
\SetKw{IF}{if}
\SetKwComment{Comment}{//}{}
\Input{
    $\mathcal{M}$, a large SALE POMDP
}
\BlankLine
Randomly split $\mathcal{M}$ into SPs $\{ \mathcal{M}_1, \dots, \mathcal{M}_K \}$ \\
Solve all SPs, yielding  $\{ V_1^*, \dots, V_K^* \}$\\
\ForEach{TimeStep $t$}{
    $\mathcal{V} \leftarrow \emptyset$ \Comment*{the set of votes}
    \For{$k \in \{1 \dots K\} $}{
        $b_k \leftarrow \text{DetermineLocalBelief}(b,k) $\\
        $\mathcal{V} \leftarrow \mathcal{V} \cup \{ \text{VoteFromSP}(k,b_k,V_k^*) \}$
    }
    $ \bar{a} \leftarrow \text{AggregateVotes}(\mathcal{V}) $\\
    Execute($\bar{a}$) \\
    $ o \leftarrow \text{receiveObservation}() $ \\
    $ b' \leftarrow \text{GlobalBeliefUpdate}(b,\bar{a},o) $ \\
}
\caption{The Mixture of POMDP Experts (MOPE) framework.}
\label{alg:pomdpalg1}
\end{algorithm}

 \subsection{Dividing into Sub-POMDPs}\label{sec:divide}
We randomly select subsets of sellers and advisors from the whole population $W$ to decompose $\mathcal{M}$ into a number of SPs. If SPA is the number of SPs that each agent can be a part of and APS is the number of agents each SP should contain, the total number of SPs necessary for the seller selection problem is given by $|W|$*SPA/APS. Also, APS is chosen such that the SPs can be computationally tractable.

\subsection{AggregateVotes($\mathcal{V}$)} Here, we describe different ways to aggregate the votes $\mathcal{V}$.

\ourparagraph{Parallel Max-Q} Here, the best action $\bar{a}$ is selected as the
action with the maximum Q-value ($\bar{a}=\arg \max_{a \in \mathcal{V}} q$),
among those present in $\mathcal{V}$. Also, Parallel Max-Q maintains, in
parallel, a set $\mathcal{B}=\left\{ b_{1},\dots,b_{K}\right\}$ of local beliefs
(corresponding to the SPs). We will use $\mathcal{B}$ as the global belief, in
this case. GlobalBeliefUpdate($b$,$\bar{a}$,$o$) is performed such that the
beliefs $b_{k}$ in each SP are updated using the Bayes' rule in parallel, when
$\ensuremath{\bar{a}\in\mathcal{A}_{k}$ (actions in $M_k$) and
$\ensuremath{o\in\mathcal{O}_{k}}}$. When either of these conditions fails,
no belief update takes place.


{
To analyse the performance of Parallel Max-Q for a given decomposition
$D=\left\{ \mathcal{M}_{1},\dots,\mathcal{M}_{K}\right\}$ of SPs, we derive a
lower bound on its performance. Specifically, we show that the expected sum of
rewards $V_{D}^{pmq}$ realized by parallel Max-Q for a decomposition $D$, is at
least as much as the optimal value $V_{k}^{*}$ realized by picking any 
SingleExpert $\mathcal{M}_{k}\in D$. For this, we need to make two assumptions:
the decomposition $D=\left\{\mathcal{M}_{1},\dots,\mathcal{M}_{K}\right\} $ is non-overlapping
(i.e., no two SPs $\mathcal{M}_{i},\mathcal{M}_{j}$ contain the same seller or advisor
state factors),
and
the true initial state distribution $\beta^{0}(s)$ is
factored along the decomposition
(i.e., $\beta^{0}(s)=\beta_{1}^{0}(s_{1}) \times \beta_{2}^{0}(s_{2}) \cdot\dots\times \beta_{K}^{0}(s_{K})$).

\begin{theorem}
    \label{th:val}
If the decomposition $D=\left\{ \mathcal{M}_{1},\dots,\mathcal{M}_{K}\right\} $
is non-overlapping, and the true initial state distribution $\beta^{0}$
is factored along the decomposition, then the value realized by \pmaxq{}
is at least as much as the value of the best Single Expert:
$
V_{D}^{pmq}(\mathcal{B}^0)
\geq
\max_{k\in\left\{ 1,\dots,K\right\} }V_{k}^{*}(\beta_{k}^0).
$
\end{theorem}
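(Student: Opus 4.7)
The plan is to establish the pointwise stronger inequality
\[ V_D^{pmq}(\mathcal{B}) \geq V_k^*(b_k) \quad \text{for every } k \text{ and every factored global belief } \mathcal{B} = (b_1, \ldots, b_K), \]
and then specialize to $\mathcal{B} = \mathcal{B}^0$ and take the maximum over $k$. The natural route is a fixed-point comparison: write $V_D^{pmq}$ as the unique fixed point of the Bellman backup operator $T$ associated with the \pmaxq{} policy in the global model, set $W(\mathcal{B}) := \max_k V_k^*(b_k)$, and verify $TW \geq W$. Monotonicity of $T$ and $\gamma$-contraction will then imply $T^n W \to V_D^{pmq}$ with $T^n W \geq W$ for every $n$, so $V_D^{pmq} \geq W$.

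First I would record the structural consequences of non-overlap plus a factored initial distribution. Non-overlap makes the state factors, action labels, and observation labels of the SPs pairwise disjoint, so for any $a \in \mathcal{A}_k$ the global reward and observation probability depend only on $s_k$. A short induction on time then shows that under the parallel Bayes updates of \pmaxq{} the global belief remains factored and each local $b_k$ is the correct marginal. Consequently each local Q-function $Q_k^*(b_k,a)$ is literally a Q-function of the global POMDP when restricted to actions in $\mathcal{A}_k$, and its Bellman optimality equation holds in the global model.

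Then comes the key verification $TW(\mathcal{B}) \geq W(\mathcal{B})$. Let $\bar{a}$ be \pmaxq{}'s chosen action, coming from SP $k'$ with reported Q-value $q = Q_{k'}^*(b_{k'}, \bar{a})$. Since SP $k$ submits its own best vote with Q-value $V_k^*(b_k)$ for each $k$, the argmax rule gives $q \geq V_k^*(b_k)$ for every $k$, hence $q \geq W(\mathcal{B})$. Using the structural identities above,
\[ TW(\mathcal{B}) = R_{k'}(b_{k'}, \bar{a}) + \gamma \sum_{o} p(o \mid b_{k'}, \bar{a})\, W(\mathcal{B}'), \]
and bounding $W(\mathcal{B}') \geq V_{k'}^*((b_{k'})^{\bar{a}}_o)$ together with SP $k'$'s Bellman equation yields $TW(\mathcal{B}) \geq Q_{k'}^*(b_{k'}, \bar{a}) = q \geq W(\mathcal{B})$, as required.

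The main obstacle I anticipate is the structural bookkeeping: one must argue carefully that the SP-local Q-values $Q_k^*$ and Bellman equations transfer verbatim to the global model whenever the executed action is in $\mathcal{A}_k$. Both hypotheses are essential here; without non-overlap the rewards and observation likelihoods would couple across SPs, and without the factored initial distribution the parallel updates would drift away from the correct global marginals, causing the bound $W(\mathcal{B}') \geq V_{k'}^*(b'_{k'})$ to lose its meaning. Once that bookkeeping is in place, the remainder is a routine monotone-contraction argument on Bellman operators.
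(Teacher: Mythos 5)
Your proposal is correct and follows essentially the same route as the paper: you isolate the same belief-consistency lemma (under non-overlap and a factored initial distribution the parallel updates coincide with the true Bayesian marginals), and your key verification $TW \geq W$ with $W(\mathcal{B}) = \max_k V_k^*(b_k)$ is exactly the paper's induction step, since bounding $W(\mathcal{B}')$ below by $V_{k'}^*\bigl(b'_{k'}\bigr)$ and invoking SP $k'$'s Bellman equation is precisely how the paper passes from $\tau$ to $\tau+1$ steps-to-go. The only difference is packaging: the paper runs an explicit induction on the number of steps-to-go, whereas you phrase the same iteration as a monotone $\gamma$-contraction argument on the policy-evaluation operator; both are sound.
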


The proof of Theorem~\ref{th:val} is given in Appendix A, which includes a detailed theoretical analysis on the value realized by Parallel Max-Q. However, when the SPs are overlapping, Parallel Max-Q at times, can perform worse than the Best SingleExpert due to inconsistent beliefs across SPs. Imagine that there is a decomposition with two SPs (SP $1$, SP $2$) with a high overlap:
there is one seller which is present in both SPs, but each SP has
some private advisors. Also, assume that there is an uniform initial
belief such that the values of the SPs are equal, and \pmaxq{} selects
a `winning' SP, say SP $1$, randomly. Subsequently, the
executed action is a seller query that asks one of the private advisors
of SP $1$ about the (shared) seller, and if the answer is `bad', the belief of SP $1$ gets updated to reflect a lower probability
of the seller being high quality. Next, however, \pmaxq{} will switch
to SP 2 where the belief has not been altered. As such SP 2 is overestimating
the value because its belief is no longer is in sync with the
true distribution. This overestimation of
the value may lead to unnecessary information gaining actions which
have costs associated with them and eventually may lead to \pmaxq{} performing
worse than the \bse.


\begin{figure}[t]
\centering
 {\epsfig{figure=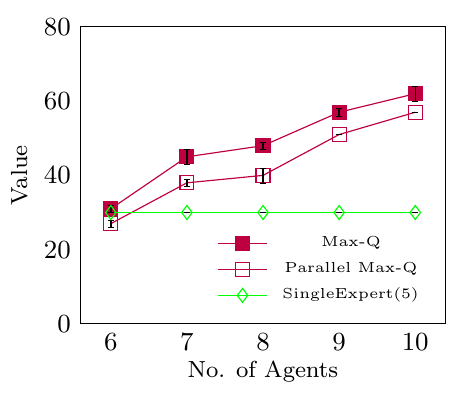, scale=1.4}}
\caption{Max-Q vs Parallel Max-Q}
\label{fig:votingH1}
\end{figure}

\ourparagraph{Max-Q} To address the issue of inconsistent beliefs, here in Max-Q, the beliefs are not maintained in parallel, instead they are maintained and updated (using Bayes' rule) at the global level, i.e., involving all state factors, as it helps to propagate information (about the sellers and advisors) across SPs. We empirically show the advantage\footnote{\label{foot1} Results are statistically verified by paired t-test ($\alpha$=$0.05$).} of Max-Q over Parallel Max-Q, for a decomposition $D$ in Fig.~\ref{fig:votingH1} by plotting their values for different seller selection problems, comprising of $6-10$ agents. We also show the value of SingleExpert (randomly chosen $5$ agent SP) in Fig.~\ref{fig:votingH1}.

\ourparagraph{Majority Voting} As Parallel Max-Q and Max-Q select the action of the maximizing SP (with the maximum Q-value), they consider the value that the action will generate for a single sub-problem. It is likely that certain actions are more useful for many sub-problems and it is better to select the action with a higher value in all SPs than the action with the highest value in one single SP. Here, we formalize one such technique called the Majority Voting approach.

Algorithm~\ref{alg:pomdpalg2} describes the Majority Voting approach in detail. Based on the votes $v \in \mathcal{V}$, we first count the number of SPs which suggested the action $a$ (Line $3$). We also determine the mean Q-value associated with each action $a$ using the qvalsum[] and meanQs[] variables (Lines $4-6$).

While using the Majority Voting technique, we need to consider the fact that not every SP will have the same set of actions as it depends on which sellers and advisors are present in the SP. For instance, while each SP has the action $DNB$, the action, say $SQ_{(a12,s23)}$ will only be present in a SP containing both $advisor12$ and $seller23$. Thus, most $SQ_{i,j}$ and $AQ_{i,i'}$ actions might not be represented in any SP, and the ones present may be represented in just one SP.


To address this, we make use of the additional information present in the actions of each SP by formulating the concept of \textit{abstract} actions. Consider the case where the belief indicates that there is a reasonable chance that $seller23$ is of high quality, but it falls just short of being sufficient to select the $BUY_{23}$ action. Here, it is very likely that all query actions that ask about $seller23$ that are represented in some SPs (we will denote this set by $SQ_{(X,s23)}$, where `X' denotes an unbound variable) will have a high value in those SPs. As such, voting on abstract actions (called Level L1 abstract actions), such as $SQ_{(X,s23)}$, $SQ_{(a12,Y)}$, $AQ_{(X,a30)}$, $AQ_{(a12,Y)}$, $BUY_{(Y)}$, $DNB$ can potentially help overcome the problem of sparsely represented actions.

However, only SPs which contain $seller23$ will have a $SQ_{(X,s23)}$ action, still resulting in unbalanced voting. Thus, rather than only abstracting away just one argument, we abstract away both arguments leading to abstract actions $SQ_{(X,Y)}$ and $AQ_{(X,Y)}$. Doing this leads to a situation where every SP has abstract actions $SQ_{(X,Y)}$, $AQ_{(X,Y)}$, $BUY_{(Y)}$ and $DNB$ (called Level L2 actions). But, still there may arise scenarios where $DNB$ actions can outnumber the $SQ_{(X,Y)}$, $AQ_{(X,Y)}$, $BUY_{(Y)}$ actions, individually, especially in cases when all good sellers are concentrated only to a group of SPs. Thus we consider (Level L3) abstract actions $DNB$ and $Others \in \{SQ_{(X,Y)}, AQ_{(X,Y)}, BUY_{(Y)}\}$, resulting in a $3$ level abstraction hierarchy shown in Fig.~\ref{fig:votingH3}.

\IncMargin{0.5em}
\begin{algorithm}[t]
\label{alg:pomdpalg3}
\SetKwInOut{Input}{Input}
\SetKwInOut{Output}{Output}
\SetKwInOut{Define}{Define}
\SetKw{IF}{if}
\SetKwComment{Comment}{//}{}
\SetKw{Return}{return}
\Input{$\mathcal{V}$, the set of votes}
\BlankLine
\Comment{Count votes for regular actions}
\ForEach{$v \in \mathcal{V}$}{
    $(a,q) \leftarrow v$ \Comment*{unpack vote}
    counts[$a$] += 1\; qvalsum[$a$] += $q$\;
}
\ForEach{$a \in \mathcal{A}$}{
    meanQs[$a$] = qvalsum[$a$] / counts[$a$]\;
}
    \Comment{Count votes for abstract actions}
    \ForEach{$v \in \mathcal{V}$}{
        $\mathcal{AV} = \{(\tilde{a},q)\} \leftarrow \text{AbstractedVotes}(v)$\;
        \ForEach{ $(\tilde{a},q) \in \mathcal{AV}$}{
            counts[$\tilde{a}$] += 1\; qvalsum[$\tilde{a}$] += $q$\;
        }
    }

    \ForEach{$  \tilde{a} \in \tilde{\mathcal{A}}  $}{
        meanQs[$\tilde{a}$] = qvalsum[$\tilde{a}$] / counts[$\tilde{a}$]\;
    }

    \Comment{Select best abstract action and refine}

    $ \tilde{a}^* = \arg \max_{\tilde{a}} ( \text{counts}[\tilde{a}] * \text{meanQs}[\tilde{a}]$ )\;
    \label{alg:best_abstract_action}
    $ \bar{a} = \text{Refine}( \tilde{a}^*, \text{counts}, \text{meanQs} )$\;
\Return{$\bar{a}$}
\caption{AggregateVotes by Majority Voting}
\label{alg:pomdpalg2}
\end{algorithm}

The L1, L2 and L3 abstract actions lead to new questions about
which ones should be included in the Majority Voting technique. In this work, we
empirically investigate these questions by considering a number of so-called voting hierarchies. In H1 hierarchy, only L1 abstract actions are considered and the best abstract action $\tilde{a}^*$ is chosen, after which the concrete action $\bar{a}$ is chosen. In hierarchy H2, first the best abstract action among the L2 abstract actions is determined, followed by the best L1 abstract action and finally the concrete action. In hierarchy H3 (shown in Fig.~\ref{fig:votingH3}), first the best L3 abstract action is determined followed by L2, L1 best abstract actions and then the concrete action.

\begin{figure}[t]
\centering
   \subfigure[H3]              {\includegraphics[height=3.98cm, width=6.32cm]{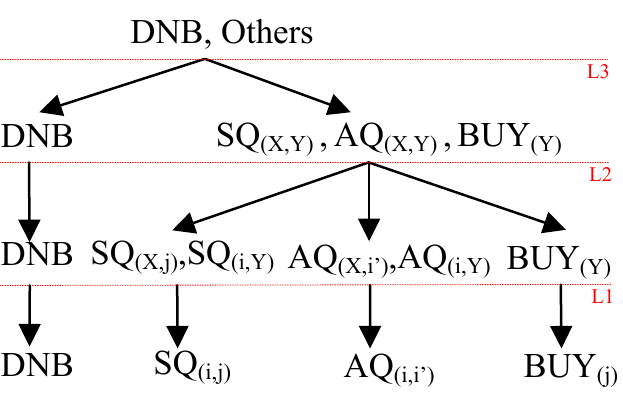}}
\caption{H3 Voting hierarchy}
\label{fig:votingH3}
\end{figure}

In Algorithm~\ref{alg:pomdpalg2}, we maintain a separate set of votes
$\mathcal{AV}$ for abstract actions $\tilde{a}$. In Line $8$, we determine
all the abstract actions that correspond to the regular action $a$ contained in vote $v$.
Subsequently, we increment their counts[$\tilde{a}$] and meanQs[$\tilde{a}$] (Lines $9$-$13$).
Then, in Line~\ref{alg:best_abstract_action}, the  best abstract action
$\tilde{a}^*$ is first selected, which
is subsequently refined to determine the best concrete action.
This refinement process depends on the employed voting hierarchy. For instance
when using the H1 hierarchy,
\[
    \bar{a} = \arg \max_{a \in \mathcal{A}(\tilde{a}^*)} \text{counts}[a] * \text{meanQs}[a],
\]
where $\mathcal{A}(\tilde{a}^*) $ denotes the set of concrete actions consistent
with abstract action $\tilde{a}^*$.


\subsection{Belief Update}\label{subsec:bel}

Though we can maintain and perform exact belief updates at the global level, i.e., involving all state factors, using the Bayes' rule, such exact inference is complex and does not scale to more than $10$ agents.
Therefore, we propose to employ the approximate inference methods.
In particular, we apply Factored Frontier (FF)~\cite{murphy2001factored}, which
maintains the belief in fully factored form, i.e., as the product of marginals of state factors $x_i$:
$b(s)= \prod_{i=1}^{|s|} \hat{b}(x_i)$. Thus the beliefs for each SP can directly be extracted via $b_k(s)=\prod_{x_i \in X_k} \hat{b}(x_i)$, where $ X_k$ denotes the set of state factors that are a part of the sub-POMDP $M_k$. While FF is a simple algorithm, and other choices are possible, it does allow influence of variables to propagate through the network and our experiments suggest that FF performs quite well.


\section{Experiments}\label{sec:exp}
%

Here, we empirically investigate the solution quality and scalability of the proposed
Mixture of POMDP experts (MOPE) technique in the e-marketplace domain. 
We are primarily interested to see if the added scalability can actually
translate into additional value from the buyer's perspective.

\ourparagraph{Experimental Setup}
We analyze different design considerations for MOPE (SPA=$4$ SPs per agent and APS=$5$ agents per SP with a uniform composition for all SPs comprised of $1$ seller and $4$ advisors, such that we can reuse V*, as described in Sec.~\ref{subsec:overview}) and compare it with:
\begin{enumerate}
\item the original SALE POMDP. We assume uniform initial beliefs and compute the SALE POMDP optimal policy using Symbolic Perseus~\cite{poupart2005exploiting}.
 \item SingleExpert(5), i.e., a randomly selected $5$-agent SP, serving as the lower bound.
  \item POMCP~\cite{silver2010monte}, an online planning approach which requires a number of random simulations (we use $10,000$ simulations per selected action) to estimate the potential for long-term reward.
       \item an \emph{optimistic heuristic} value $V_{maxv}$, which is the value obtained by running many simulations of MOPE (Majority Voting with H3 hierarchy and SPA=$8$) on `ideal' global problems (i.e., on $100$ agent problems with good sellers and trustworthy advisors). We consider such a heuristic as we know that beginning with a most favourable state (which in our case is the presence of good sellers and trustworthy advisors in the SPs, as they have a higher probability of resulting in successful transactions), results in best performance while executing a POMDP policy.
            \item the Q-MDP value $V_{qmdp}$, which is the value obtained by considering the states to be fully observable in the next time step~\cite{littmanlearning}. Though majority of our (query) actions do not have value while computing $V_{qmdp}$, we still consider the QMDP value as it can serve as an upper bound.

\end{enumerate}

\begin{figure*}[t]
\centering
\hspace{2mm}
\epsfig{figure=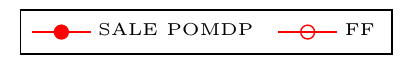, scale=1.6}\\
 \subfigure[FF Error]  {\epsfig{figure=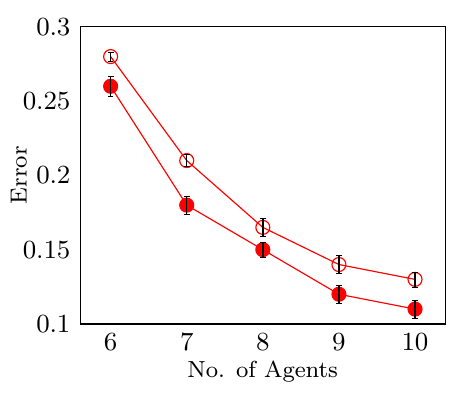, scale=1.4}}
  \subfigure[FF Value] {\epsfig{figure=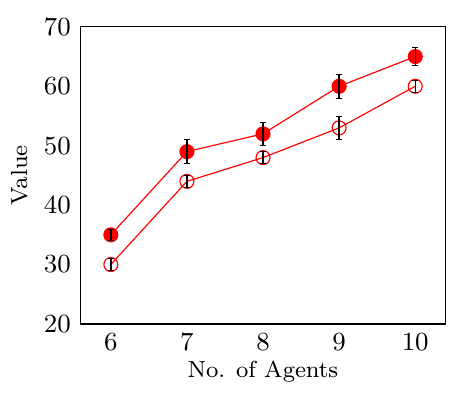, scale=1.4}}
\caption{Influence of FF algorithm} \label{fig:influence}
\end{figure*}
We conduct experiments in a simulated e-marketplace, where buyers need to
choose sellers as successful transaction partners. We measure the average
\emph{error} $\in [0,1]$ in terms of the percentage of
`unsuccessful transactions'
(buying from a bad seller or taking the $DNB$ action in the presence of a good seller)
and \emph{value}, i.e, the discounted ($0.95$) reward in the
process of choosing a seller. The buyer pays a cost of $1$ for querying
advisors about other advisors, $10$ for querying about a seller, gains $100$
for choosing a good seller or taking $DNB$ when no seller is of good
quality, loses $100$ for choosing a bad seller or taking $DNB$
when there is a good seller. The number of sellers is $20\%$ of
the whole population $W$ and number of advisors is $80\%$ among which $20\%$
are untrustworthy. All the results are values averaged over $500$ iterations from
the point of view of a single buyer. We consider single transaction settings, where the
buyer has no previous experience with the seller.

\begin{table}[b]
\centering
\scriptsize
\begin{tabularx}{\linewidth}{|X|X|X|X|X|X|X|X|X|X|}
\hline $W$    &6 &7 &8 &9   &10     &25     &50      &75      &100  \\
\hline
$|S|$     &$2^6$ &$2^7$ &$2^8$ &$2^9$ &$2^{10}$  &$2^{25}$  &$2^{50}$  &$2^{75}$ &$2^{100}$  \\
$|A|$   &$27$ &$38$ &$45$ &$59$ & $75$  &$486$ &$1971$ &$4456$  &$7941$ \\
\hline
\end{tabularx}
\caption{Size of the seller selection problem}
\label{table:size}
\end{table}

To analyze the scalability, we increase the number of agents $W$ in the e-marketplace from $6-100$ (size of the corresponding seller selection problem, is given in Table~\ref{table:size}) and measure the performance of the approaches in Fig.~\ref{fig:influence}-\ref{fig:overall}. As SALE POMDP does not scale effectively to more than $10$ agents (ran out of time while computing the policy), its performance is not shown for $W$$>$$10$ in the figures. 

\subsection{Influence of using Factored Frontier (FF)} Fig.~\ref{fig:influence}(a-b) show the influence of
using FF for the belief update.
We see that while the approximation introduced by FF leads to a reduction in value compared to using exact belief updates, the difference is quite small.

\ourparagraph{Analysis of Different Design Schemes for the Majority Voting MOPE Approach} Fig.~\ref{fig:majdesign} shows the analysis of the different design considerations, such as the performance of voting hierarchies, influence of SPA and APS for the Majority Voting MOPE approach. In Fig.~\ref{fig:majdesign}(a-b), we analyse the performance of the H1, H2 and H3 hierarchies while using the Majority Voting MOPE approach. We see that H3 hierarchy outperforms H1 and H2. Also, we see that for (most) cases where the SALE POMDP is able to provide an answer, it is performing slightly better than H3. This is expected since it does a full POMDP reasoning over the entire state space. However, for larger problems, the difference in performance becomes negligible and when including more advisors, H3 finds policies that lead to significantly smaller errors and higher payoffs. SingleExpert(5) achieves a constant performance as it always considers a group of $5$ agents to make decisions. The performance of all other approaches increase with the number of agents as there are more advisors to seek information about the sellers.

In Fig.~\ref{fig:majdesign}(c-d), we analyse the influence of the number of SPs per agent (SPA), using H3 Majority Voting (with default SPA=$4$). Fig.~\ref{fig:majdesign}(c-d) show that performance of H3 increases with SPA, i.e., H3S8 (SPA=$8$) shows the best performance and H3S2 (SPA=$2$) shows the least performance. This is because, on increasing SPA, the total number of SPs considered increase, resulting in more informed decision making. We see that H3 and H3S8 outperform SALE POMDP for $10$ agents, suggesting that the quality of Symbolic Perseus degrades for larger problems. Fig.~\ref{fig:majdesign}(e-f) show the influence of the number of agents per SP (APS) for the H3S8 technique.
H3S8A6 (APS=$6$), H3S8A7 (APS=$7$) and H3S8A8 (APS=$8$) outperform H3S8 (default APS=$5$) as increasing APS improves performance by reasoning over a larger state space.
Importantly, we see how this enables MOPE to accumulate a significantly higher value (H3S8A7 obtains a value of $72$ for $100$ agents) than the best SALE POMDP value ($65$ for $10$ agents).
We expect that the lower performance of H3S8A8 compared to H3S8A7 is caused by a relative degradation of the solution quality of the (larger) SPs. However, H3S8A6, H3S8A7 and H3S8A8 involve greater policy computation time than H3S8.

\begin{figure*}[!htbp]
\vspace{-6mm}
\centering
\hspace{3mm}
\epsfig{figure=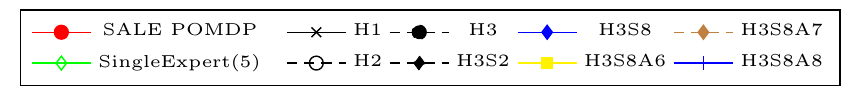, scale=1.6}\\ \vspace{-2mm}
 \subfigure[Voting Hierarchy Error]  {\epsfig{figure=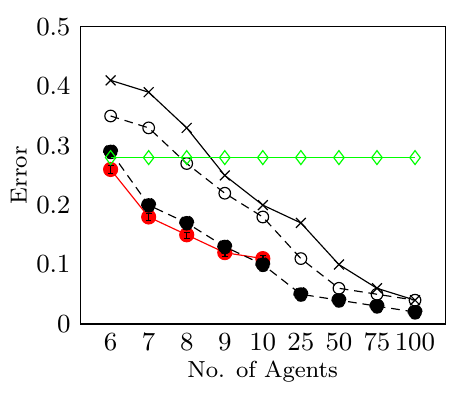, scale=1.4}} \vspace{-2mm}
\subfigure[Voting Hierarchy Value]   {\epsfig{figure=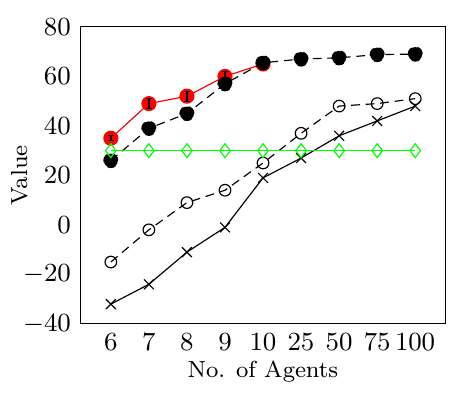, scale=1.4}} \vspace{-2mm}\\ \vspace{-2mm}
 \subfigure[SPA Error] {\epsfig{figure=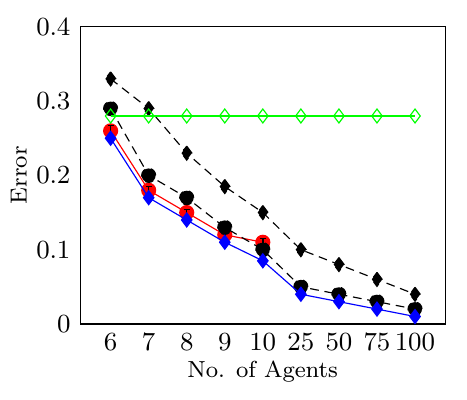, scale=1.4}} \vspace{-2mm}
\subfigure[SPA Value] {\epsfig{figure=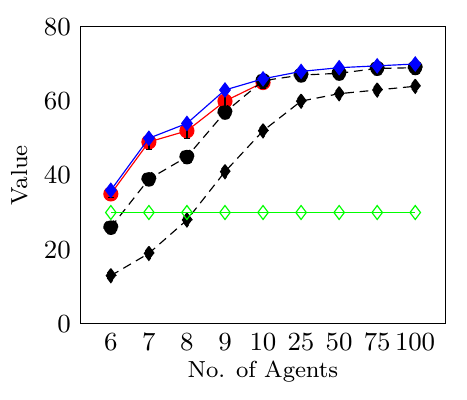, scale=1.4}}  \vspace{-2mm} \\ \vspace{-2mm}
\subfigure[APS Error] {\epsfig{figure=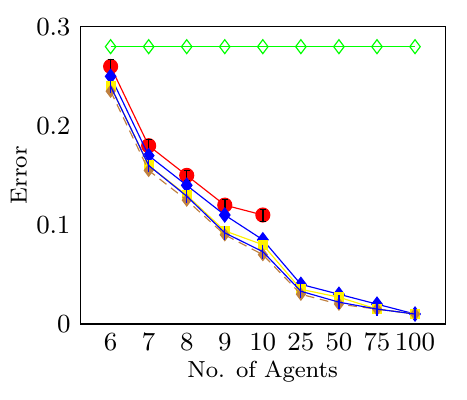, scale=1.4}} \vspace{-2mm}
 \subfigure[APS Value]  {\epsfig{figure=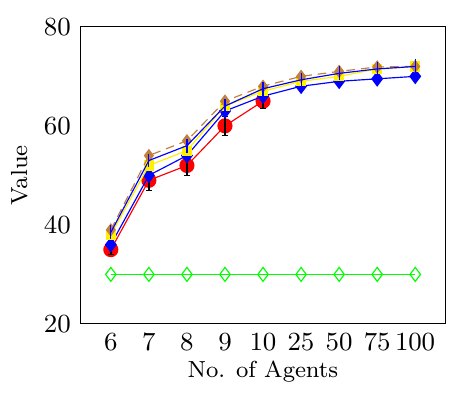, scale=1.4}}
\caption{Performance comparison of the Majority Voting design schemes: (a-b) voting hierarchies; (c-d) influence of SPA; (e-f) influence of APS} \label{fig:majdesign}
\vspace{-7mm}
\end{figure*}

\ourparagraph{Comparison with Max-Q, POMCP, $\text{\textbf{\emph{V}}}_{\text{\textbf{\emph{maxv}}}}$ and $\text{\textbf{\emph{V}}}_{\text{\textbf{\emph{qmdp}}}}$}
In Fig.~\ref{fig:overall}, we compare the performance of H3S8 along with Max-Q (SPA=$8$, APS=$5$ and using the FF algorithm for belief update). We have shown the error and value for the POMCP approach in Table~\ref{table:pomcp} separately, to retain the clarity in Fig.~\ref{fig:overall}(a-b). We see that H3S8 outperforms both Max-Q and POMCP. As the number of agents increases, performance of POMCP decreases, as it requires a larger number of simulations to sample the beliefs and histories about the agents. Also, POMCP does not scale well with the number of actions (which is large in these problems). We have not shown the POMCP results for $W>25$ due to the complexity of the simulations.
\begin{figure*}[!t]
\centering
\vspace{8mm}
\hspace{3mm}
\epsfig{figure=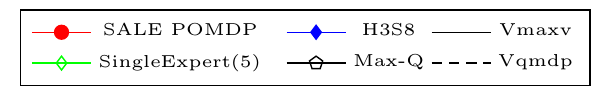, scale=1.6}\\
 \subfigure[Error]  {\epsfig{figure=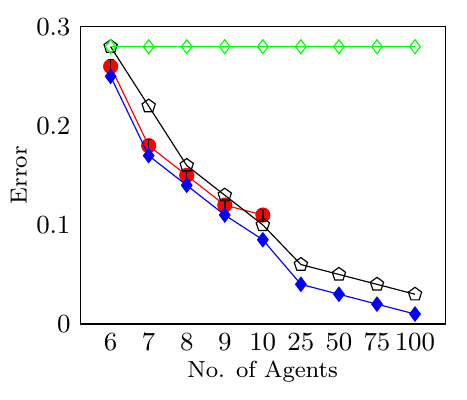, scale=1.4}}
 \subfigure[Value] {\epsfig{figure=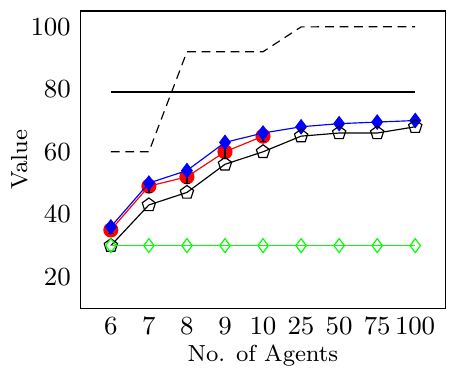, scale=1.4}}
\caption{Performance Comparison of the Majority Voting technique with Max-Q, $V_{maxv}$, $V_{qmdp}$} \label{fig:overall}
\end{figure*}

\begin{table}[!htbp]
\centering
\scriptsize
\begin{tabularx}{\linewidth}{|X|X|X|X|X|X|X|X|X|X|}
\hline $W$    &6 &7 &8 &9   &10     &25     &50      &75      &100  \\
\hline
error    &$0.60$ &$0.62$ &$0.70$ &$0.80$ &$0.81$  &$0.90$  &-  &- &-  \\
value   &$-46$ &$-54$ &$-68$ &$-90$ & $-92$  &$-110$ &- &-  &- \\
\hline
\end{tabularx}
\caption{Performance of POMCP}\label{table:pomcp}
\end{table}

Fig.~\ref{fig:overall}(b) also shows the $V_{maxv}$ and $V_{qmdp}$ values. Specifically, we consider the $V_{maxv}$ value, in order show the performance of the Majority Voting MOPE scheme (H3S8), under the most favourable conditions. We know that beginning with a most favourable state always results in best performance while executing a POMDP policy. In our case, the most favourable state represents the presence of good quality sellers and trustworthy advisors in the SPs, as they have a higher probability of resulting in successful transactions, thereby leading to greater value. As we can see, the $V_{maxv}$ value is greater than the value obtained by H3S8 for normal problems (in which sellers can also be of low quality and advisors can be untrustworthy). $V_{qmdp}$ is the upper bound value 
and looks like a piecewise function because of the same number of sellers in some of the problems. Thus, Fig.~\ref{fig:overall}(b) shows the lower bound, i.e., the value of SingleExpert(5), optimistic heuristic value $V_{maxv}$, and the upper bound $V_{qmdp}$ for a $5$-agent decomposition.

Fig.~\ref{fig:time} shows the policy computation time for each seller selection problem involving $6$ to $100$ agents. For POMCP, we measure the simulation time per episode. We see that the time taken by H3S8, SingleExpert(5), Max-Q is less than SALE POMDP and POMCP. Also, the constant time $22s$ for H3S8, SingleExpert(5) and Max-Q is due to using the same $5$-agent policy for all SPs.

\begin{figure}[t]
\centering
       \subfigure[Time Complexity]              {\epsfig{figure=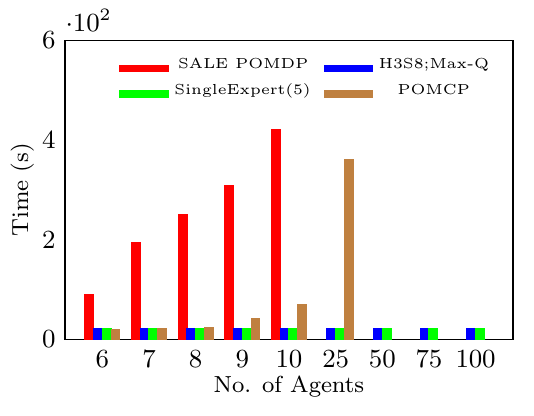, scale=1.3}}
\caption{Comparison of policy time} \label{fig:time}
\vspace{-2mm}
\end{figure}

\ourparagraph{Performance in WSN Domain} While the MOPE approach can improve the scalability of the SALE POMDP model in the e-marketplace domain (as shown in Fig.~\ref{fig:overall}), it can also be applied to improve the scalability of the POMDP models in other domains, which follow a similar trust propagation structure as the SALE POMDP. To verify this, we also apply the MOPE approach (H3S4 Majority Voting with APS=$3$, SPA=$4$) to improve the scalability of the SRP model (see \cite{irissappanesecure} for details) in the WSN domain and compare it with: 1) the original SRP model; and 2) SingleExpert(3) with $3$ agents.
We use the same simulation settings as used in \cite{irissappanesecure}.
Fig.~\ref{fig:wsn}(a-b) show that SRP performs better than H3S4 for $3-5$ neighbors. However, it cannot provide solutions for more than $5$ neighbors, while H3S4 can scale up to $40$ neighbors, generating a much higher value. Also, the policy computation time is $73s$ for H3S4 and $736s$ for the SRP model for $5$ neighbors.
\begin{figure*}[t]
\centering
\hspace{1mm}
\epsfig{figure=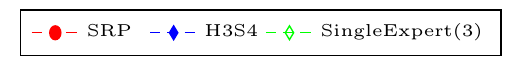, scale=1.6}\\
  \subfigure[WSN Error]              {\epsfig{figure=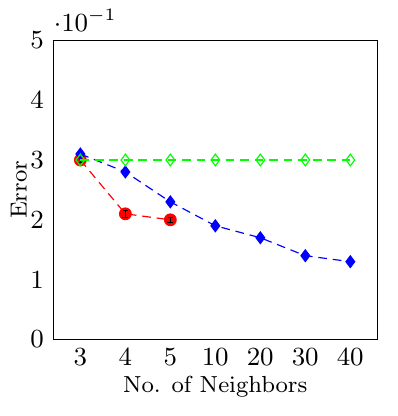, scale=1.4}}
         \subfigure[WSN Value]              {\epsfig{figure=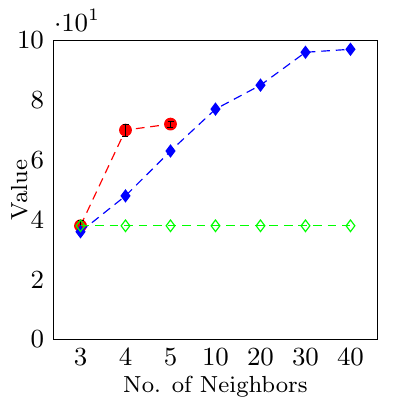, scale=1.4}}
\caption{Performance in wireless sensor networks} \label{fig:wsn}
\vspace{-2mm}
\end{figure*}

\section{Related Work}

 There is extensive literature on scalable solutions to solving POMDPs. Point-Based Value Iteration
(PBVI)~\cite{pineau2003point} computes a value function over a finite subset of the belief space. A point based algorithm explores the belief space, focusing on the reachable belief states, while maintaining a value function by applying the point-based backup operator. Bounded policy iteration (BPI)~\cite{poupart2003bounded} incrementally
constructs a finite state controller by alternating policy improvement and policy evaluation until a local optimum is reached by slowly increasing the number of nodes. Gradient ascent~\cite{aberdeen2002scaling} restricts its search to controllers of a bounded size. However, the above approaches scale only to thousands of states~\cite{poupart2004vdcbpi}.

In structured domains, further scaling can be achieved by exploiting compact representations~\cite{feng2001approximate,guestrin2001solving,veiga2014point,poupart2004vdcbpi}, such as decision trees~\cite{boutilier1996computing}, algebraic decision diagrams
(ADDs)~\cite{hansen2000dynamic}, or by
indirectly compressing the belief space into a small subspace by value-directed compression
(VDC)~\cite{poupart2005exploiting}, one of which is also applied in the regular SALE POMDP model.

While all the above are offline policy computation algorithms, recently an online POMDP planning algorithm called POMCP~\cite{silver2010monte} has successfully scaled up to very large problems. POMCP is based on Monte Carlo tree search, which tries to break the curse of dimensionality and history by sampling states from the current belief and histories with a black-box simulator. On the other hand, in our approach, we use offline policy computation~\cite{poupart2005exploiting}, to compute optimal policies for each sub-POMDP, while still achieving better scalability than POMCP (as shown in our experiments).

Some approaches use a similar concept of decomposing a
(PO)MDP into smaller sub-problems.
\citet{meuleau1998solving} assume that sub-problems are \emph{very weakly coupled}:
each sub-problem corresponds to an independent sub-task whose state/action
spaces do not directly influence the other tasks.
In contrast, MOPE divides a single large POMDP problem into SPs, which can contain overlapping
state variables/actions.
Similar to our work, \citet{williams2007scaling} consider a more general decomposition, but they
rely on domain specific heuristics, while we investigate several
general methods to aggregate the recommendations from all SPs.
\citet{yadav2015preventing} also propose an 
approach which decomposes a POMDP into SPs, 
but these are formed in a very different way: by sampling \emph{values} for sub-sets of hidden state factors. 
%
A major difference between all these works and ours, is that their 
sub-problems directly follow from the domain. In contrast, in our
approach, the number of sub-problems can be chosen to control the time vs.
quality trade-off.


Decomposition has also been a popular technique in multiagent planning
approaches~\cite{guestrin2001multiagent,becker2003transition,nair2003taming,goldman2008communication,witwicki2010influence,oliehoek2012influence,amato2015scalable,oliehoekfactored}.
However, in all these cases structure is exploited that is particular to the
multiagent setting by extending insights from factored (PO)MDP approaches and when
applied to single-agent problems such as a SALE POMDP these methods do not
offer any additional benefits.


MOPE can be interpreted as a type of ensemble method~\cite{dietterich2000ensemble}. In particular, there is a resemblance to random forests~\cite{breiman2001random}:
the way that they randomly select features is not unlike our random selection of state factors (seller and advisor variables).

\section{Conclusion and Future Work}
We propose the Mixture of POMDP Experts (MOPE) technique to address the scalability issues
in solving large seller selection (SALE) POMDP problems for e-marketplaces. MOPE works by dividing the large POMDP problem into computationally tractable smaller
sub-POMDPs and then aggregates the actions of the sub-POMDPs. Extensive evaluation shows that MOPE achieves a reasonable approximation to the SALE POMDP for small problems and can scale up to a hundred agents 
by effectively exploiting the presence of more advisors
to generate significantly higher buyer satisfaction. We also show that MOPE improves the scalability of a POMDP model in the sensor network domain.

We conduct experiments to select the best action hierarchy to be used in the MOPE approach. However, whether empirically determining good hierarchies for other problems (other than seller selection problems) will be possible is an interesting open question, which we would like to investigate as future work. We will also analyze more sophisticated ways (e.g., using community detection) of dividing the sub-POMDPs rather than random partitioning.

\section*{Acknowledgments}

This work is supported by the A*STAR SERC grant (1224104047) awarded to Dr. Jie Zhang, NWO Innovational Research Incentives Scheme Veni \#639.021.336 awarded to Dr. Frans A. Oliehoek and the Institute for Media Innovation at Nanyang Technological University.

\bibliographystyle{aaai}

\begin{thebibliography}{}

\bibitem[\protect\citeauthoryear{Aberdeen and
  Baxter}{2002}]{aberdeen2002scaling}
Aberdeen, D., and Baxter, J.
\newblock 2002.
\newblock Scaling internal-state policy-gradient methods for {POMDP}s.
\newblock In {\em Proceedings of the 19th International Conference on Machine
  Learning (ICML)},  3--10.

\bibitem[\protect\citeauthoryear{Amato and Oliehoek}{2015}]{amato2015scalable}
Amato, C., and Oliehoek, F.~A.
\newblock 2015.
\newblock Scalable planning and learning for multiagent {POMDPs}.
\newblock In {\em Proceedings of the 29th AAAI Conference on Artificial
  Intelligence (AAAI)},  1995--2002.

\bibitem[\protect\citeauthoryear{Becker \bgroup et al\mbox.\egroup
  }{2003}]{becker2003transition}
Becker, R.; Zilberstein, S.; Lesser, V.; and Goldman, C.~V.
\newblock 2003.
\newblock Transition-independent decentralized {M}arkov decision processes.
\newblock In {\em Proceedings of the 2nd International Joint Conference on
  Autonomous Agents and Multiagent Systems (AAMAS)},  41--48.

\bibitem[\protect\citeauthoryear{Boutilier and
  Poole}{1996}]{boutilier1996computing}
Boutilier, C., and Poole, D.
\newblock 1996.
\newblock Computing optimal policies for partially observable decision
  processes using compact representations.
\newblock In {\em Proceedings of the 13th National Conference on Artificial
  Intelligence (AAAI)},  1168--1175.

\bibitem[\protect\citeauthoryear{Breiman}{2001}]{breiman2001random}
Breiman, L.
\newblock 2001.
\newblock Random forests.
\newblock {\em Machine learning} 45(1):5--32.

\bibitem[\protect\citeauthoryear{Cassandra, Kaelbling, and
  Littman}{1994}]{cassandra1994acting}
Cassandra, A.~R.; Kaelbling, L.~P.; and Littman, M.~L.
\newblock 1994.
\newblock Acting optimally in partially observable stochastic domains.
\newblock In {\em Proceedings of the 12th National Conference on Artificial
  Intelligence (AAAI)}, volume~94,  1023--1028.

\bibitem[\protect\citeauthoryear{Dietterich}{2000}]{dietterich2000ensemble}
Dietterich, T.~G.
\newblock 2000.
\newblock Ensemble methods in machine learning.
\newblock In {\em Multiple Classifier Systems}, volume 1857. LNCS, Springer.
\newblock  1--15.

\bibitem[\protect\citeauthoryear{Feng and Hansen}{2001}]{feng2001approximate}
Feng, Z., and Hansen, E.~A.
\newblock 2001.
\newblock Approximate planning for factored {POMDPs}.
\newblock In {\em Proceedings of the 6th European Conference on Planning
  (ECP)}.

\bibitem[\protect\citeauthoryear{Goldman and
  Zilberstein}{2008}]{goldman2008communication}
Goldman, C.~V., and Zilberstein, S.
\newblock 2008.
\newblock Communication-based decomposition mechanisms for decentralized
  {MDPs}.
\newblock {\em Journal of Artificial Intelligence Research} 32:169--202.

\bibitem[\protect\citeauthoryear{Guestrin, Koller, and
  Parr}{2001a}]{guestrin2001multiagent}
Guestrin, C.; Koller, D.; and Parr, R.
\newblock 2001a.
\newblock Multiagent planning with factored {MDP}s.
\newblock In {\em Advances in Neural Information Processing Systems (NIPS)},
  volume~1,  1523--1530.

\bibitem[\protect\citeauthoryear{Guestrin, Koller, and
  Parr}{2001b}]{guestrin2001solving}
Guestrin, C.; Koller, D.; and Parr, R.
\newblock 2001b.
\newblock Solving factored {POMDPs} with linear value functions.
\newblock In {\em Proceedings of the IJCAI Workshop on Planning under
  Uncertainty and Incomplete Information},  67--75.

\bibitem[\protect\citeauthoryear{Hansen and Feng}{2000}]{hansen2000dynamic}
Hansen, E.~A., and Feng, Z.
\newblock 2000.
\newblock Dynamic programming for {POMDPs} using a factored state
  representation.
\newblock In {\em Proceedings of the 5th International Conference on AI
  Planning Systems (AIPS)},  130--139.

\bibitem[\protect\citeauthoryear{Irissappane and
  Zhang}{2015}]{irissappane2015filtering}
Irissappane, A.~A., and Zhang, J.
\newblock 2015.
\newblock Filtering unfair ratings from dishonest advisors in multi-criteria
  e-markets: a biclustering-based approach.
\newblock {\em Autonomous Agents and Multi-Agent Systems}  1--30.

\bibitem[\protect\citeauthoryear{Irissappane \bgroup et al\mbox.\egroup
  }{2015}]{irissappanesecure}
Irissappane, A.~A.; Zhang, J.; Oliehoek, F.~A.; and Dutta, P.~S.
\newblock 2015.
\newblock Secure routing in wireless sensor networks via {POMDPs}.
\newblock In {\em Proceedings of the 24th International Joint Conference on
  Artificial Intelligence (IJCAI)},  2617--2623.

\bibitem[\protect\citeauthoryear{Irissappane, Oliehoek, and
  Zhang}{2014}]{irissappane2014pomdp}
Irissappane, A.~A.; Oliehoek, F.~A.; and Zhang, J.
\newblock 2014.
\newblock A {POMDP} based approach to optimally select sellers in electronic
  marketplaces.
\newblock In {\em Proceedings of the 13th International Conference on
  Autonomous Agents and Multiagent Systems ({AAMAS})},  1329--1336.

\bibitem[\protect\citeauthoryear{Kaelbling, Littman, and
  Cassandra}{1998}]{kaelbling1998planning}
Kaelbling, L.~P.; Littman, M.~L.; and Cassandra, A.~R.
\newblock 1998.
\newblock Planning and acting in partially observable stochastic domains.
\newblock {\em Artificial intelligence} 101(1):99--134.

\bibitem[\protect\citeauthoryear{Koller and
  Friedman}{2009}]{koller2009probabilistic}
Koller, D., and Friedman, N.
\newblock 2009.
\newblock {\em Probabilistic graphical models: principles and techniques}.
\newblock MIT press.

\bibitem[\protect\citeauthoryear{Kurniawati, Hsu, and
  Lee}{2008}]{Kurniawati08RSS}
Kurniawati, H.; Hsu, D.; and Lee, W.~S.
\newblock 2008.
\newblock {SARSOP}: Efficient point-based {POMDP} planning by approximating
  optimally reachable belief spaces.
\newblock In {\em Proceedings of the Robotics: Science and Systems}.

\bibitem[\protect\citeauthoryear{Littman, Cassandra, and
  Kaelbling}{1995}]{littmanlearning}
Littman, M.~L.; Cassandra, A.~R.; and Kaelbling, L.~P.
\newblock 1995.
\newblock Learning policies for partially observable environments: {S}caling
  up.
\newblock In {\em Proceedings of the 12th International Conference on Machine
  Learning (ICML)},  362--370.

\bibitem[\protect\citeauthoryear{Meuleau \bgroup et al\mbox.\egroup
  }{1998}]{meuleau1998solving}
Meuleau, N.; Hauskrecht, M.; Kim, K.-E.; Peshkin, L.; Kaelbling, L.~P.; Dean,
  T.~L.; and Boutilier, C.
\newblock 1998.
\newblock Solving very large weakly coupled {M}arkov decision processes.
\newblock In {\em Proceedings of the 15th AAAI Conference on Artificial
  Intelligence (AAAI)},  165--172.

\bibitem[\protect\citeauthoryear{Murphy and Weiss}{2001}]{murphy2001factored}
Murphy, K., and Weiss, Y.
\newblock 2001.
\newblock The factored frontier algorithm for approximate inference in {DBNs}.
\newblock In {\em Proceedings of the 17th Seventeenth conference on Uncertainty
  in Artificial Intelligence (UAI)},  378--385.

\bibitem[\protect\citeauthoryear{Nair \bgroup et al\mbox.\egroup
  }{2003}]{nair2003taming}
Nair, R.; Tambe, M.; Yokoo, M.; Pynadath, D.; and Marsella, S.
\newblock 2003.
\newblock Taming decentralized {POMDPs}: {T}owards efficient policy computation
  for multiagent settings.
\newblock In {\em Proceedings of the 18th International Joint Conference on
  Artificial Intelligence (IJCAI)},  705--711.

\bibitem[\protect\citeauthoryear{Oliehoek, Gokhale, and
  Zhang}{2012}]{Oliehoek12TRUST}
Oliehoek, F.~A.; Gokhale, A.~A.; and Zhang, J.
\newblock 2012.
\newblock Reasoning about advisors for seller selection in e-marketplaces via
  {POMDP}s.
\newblock In {\em Proceedings of the 15th International Workshop on Trust in
  Agent Societies (TRUST12)},  67--78.

\bibitem[\protect\citeauthoryear{Oliehoek, Spaan, and
  Witwicki}{2015}]{oliehoekfactored}
Oliehoek, F.~A.; Spaan, M.~T.; and Witwicki, S.~J.
\newblock 2015.
\newblock Factored upper bounds for multiagent planning problems under
  uncertainty with non-factored value functions.
\newblock In {\em Proceedings of the 24th International Joint Conference on
  Artificial Intelligence (IJCAI)},  1645--1651.

\bibitem[\protect\citeauthoryear{Oliehoek, Witwicki, and
  Kaelbling}{2012}]{oliehoek2012influence}
Oliehoek, F.~A.; Witwicki, S.~J.; and Kaelbling, L.~P.
\newblock 2012.
\newblock Influence-based abstraction for multiagent systems.
\newblock In {\em Proceedings of the 26th AAAI Conference on Artificial
  Intelligence (AAAI)},  1422–1428.

\bibitem[\protect\citeauthoryear{Pineau, Gordon, and
  Thrun}{2003}]{pineau2003point}
Pineau, J.; Gordon, G.; and Thrun, S.
\newblock 2003.
\newblock Point-based value iteration: An anytime algorithm for {POMDPs}.
\newblock In {\em Proceedings of the 18th International Joint Conference on
  Artificial Intelligence (IJCAI)}, volume~3,  1025--1032.

\bibitem[\protect\citeauthoryear{Poupart and
  Boutilier}{2003}]{poupart2003bounded}
Poupart, P., and Boutilier, C.
\newblock 2003.
\newblock Bounded finite state controllers.
\newblock In {\em Advances in Neural Information Processing Systems (NIPS)}.

\bibitem[\protect\citeauthoryear{Poupart and
  Boutilier}{2004}]{poupart2004vdcbpi}
Poupart, P., and Boutilier, C.
\newblock 2004.
\newblock {VDCBPI}: an approximate scalable algorithm for large {POMDPs}.
\newblock In {\em Advances in Neural Information Processing Systems (NIPS)},
  1081--1088.

\bibitem[\protect\citeauthoryear{Poupart}{2005}]{poupart2005exploiting}
Poupart, P.
\newblock 2005.
\newblock {\em Exploiting structure to efficiently solve large scale Partially
  Observable {M}arkov Decision Processes}.
\newblock Ph.D. Dissertation, University of Toronto.

\bibitem[\protect\citeauthoryear{Regan, Cohen, and
  Poupart}{2005}]{regan2005advisor}
Regan, K.; Cohen, R.; and Poupart, P.
\newblock 2005.
\newblock The {A}dvisor-{POMDP}: A principled approach to trust through
  reputation in electronic markets.
\newblock In {\em Proceedings of the International Conference on Privacy,
  Security and Trust (PST)}.

\bibitem[\protect\citeauthoryear{Silver and Veness}{2010}]{silver2010monte}
Silver, D., and Veness, J.
\newblock 2010.
\newblock Monte-{C}arlo planning in large {POMDPs}.
\newblock In {\em Advances in Neural Information Processing Systems (NIPS)},
  2164--2172.

\bibitem[\protect\citeauthoryear{Spaan}{2012}]{spanpomdp}
Spaan, M. T.~J.
\newblock 2012.
\newblock {P}artially observable {M}arkov decision processes.
\newblock In {\em Reinforcement Learning}, volume~12. Springer.
\newblock  387--414.

\bibitem[\protect\citeauthoryear{Veiga \bgroup et al\mbox.\egroup
  }{2014}]{veiga2014point}
Veiga, T.~S.; Spaan, M.~T.; Lima, P.~U.; Brodley, C.~E.; and Stone, P.
\newblock 2014.
\newblock Point-based {POMDP} solving with factored value function
  approximation.
\newblock In {\em Proceedings of the 28th AAAI Conference on Artificial
  Intelligence (AAAI)},  2512--2518.

\bibitem[\protect\citeauthoryear{Williams and
  Young}{2007}]{williams2007scaling}
Williams, J.~D., and Young, S.
\newblock 2007.
\newblock Scaling {POMDPs} for spoken dialog management.
\newblock {\em IEEE Transactions on Audio, Speech, and Language Processing}
  15(7):2116--2129.

\bibitem[\protect\citeauthoryear{Witwicki and
  Durfee}{2010}]{witwicki2010influence}
Witwicki, S.~J., and Durfee, E.~H.
\newblock 2010.
\newblock Influence-based policy abstraction for weakly-coupled {D}ec-{POMDPs}.
\newblock In {\em Proceedings of the International Conference on Automated
  Planning and Scheduling (ICAPS)},  185--192.

\bibitem[\protect\citeauthoryear{Yadav \bgroup et al\mbox.\egroup
  }{2015}]{yadav2015preventing}
Yadav, A.; Soriano~Marcolino, L.; Rice, E.; Petering, R.; Winetrobe, H.;
  Rhoades, H.; Tambe, M.; and Carmichael, H.
\newblock 2015.
\newblock Preventing {HIV} spread in homeless populations using {PSINET}.
\newblock In {\em Proceedings of the 27th Conference on Innovative Applications
  of Artificial Intelligence (IAAI)}.

\end{thebibliography}

\onecolumn
\setcounter{secnumdepth}{0}
\section*{Appendix A}

\section{Analysis of \pmaxq}

Here, we present an analysis of Parallel Max-Q. In this analysis, we assume
that the optimal values for the sub-POMDPs (SPs) can be computed exactly.
\paragraph{Parallel Max-Q.}

Here, we first summarize the essential characteristics of Parallel Max-Q.
Like all MOPE approaches, this method divides the large SALE POMDP into
$D=\left\{ \mathcal{M}_{1},\dots,\mathcal{M}_{K}\right\} $ sub-POMDPs
(SPs) randomly, and computes their optimal values $\left\{ V_{1}^{*},\dots,V_{K}^{*}\right\} $
before execution. What is specific for \pmaxq{} is that it treats
these SPs as SingleExperts that are maintained in parallel. The actions are
selected based on the maximizing Q-value in any SP.

\subparagraph{Beliefs:}

\pmaxq{} maintains, in parallel, a set $\mathcal{B}$ of local beliefs:
$\mathcal{B}=\left\{ b_{1},\dots,b_{K}\right\} $. These beliefs $b_{k}$
are defined over local states $s_{k}$.

\subparagraph{Belief update:}

The belief update operator $BU^{pmq}$ produces a new set of beliefs
$\mathcal{B}^{ao}$=$BU^{pmq}(\mathcal{B},a,o)$. In particular, each
sub-problem $\mathcal{M}_{k}$ is updated in parallel such that $\mathcal{B}^{ao}=\left\{ b_{1}^{ao},\dots,\dots,b_{K}^{ao}\right\} $, where,
\begin{eqnarray*}
 b_{k}^{ao}=\begin{cases}
BU(b_{k},a,o), & \mbox{if \ensuremath{a\in\mathcal{A}_{k}} and \ensuremath{o\in\mathcal{O}_{k}}}\\
b_{k}, & \text{otherwise}
\end{cases}
\end{eqnarray*}
That is, each local belief is updated (using the Bayesian belief update
operator $BU$) if the taken action and received observation exist
in that SP.

\subparagraph{Action Selection:}

At every stage, the \pmaxq{} selects the `winning' SP
\begin{equation}
\bar{k}=\arg\max_{k\in\left\{ 1,\dots,K\right\} }V_{k}^{*}(b_{k})\label{eq:pmaxq:selected_k}
\end{equation}
and subsequently executes its maximizing action:
\begin{equation}
\bar{a}=\arg\max_{a\in\mathcal{A}_{\bar{k}}}Q_{\bar{k}}^{*}(b_{\bar{k}},a)\label{eq:pmaxq:selected_a}
\end{equation}
Here, the Q-value is the standard Q-value for a POMDP:
\[
Q_{\bar{k}}^{*}(b_{\bar{k}},a)=R_{\bar{k}}(b_{\bar{k}},a)+\gamma\sum_{o\in\mathcal{O}_{\bar{k}}}\Pr(o|b,a)V_{\bar{k}}^{*}(b_{\bar{k}}^{ao})
\]
(The Q-value functions can be pre-computed along with the $\left\{ V_{1}^{*},\dots,V_{K}^{*}\right\} $
or the back-projection can be computed online). We will also denote
this action selection operation as $Act^{pmq}()$ such that $\bar{a}\triangleq Act^{pmq}(\mathcal{B})$.

\paragraph{\pmaxq{} Value Function.}

Here, we analyze the value realized by \pmaxq{}by giving
a formulation of its value function. We first identify the dependencies
of this value function:
\begin{itemize}
\item It clearly depends on the chosen decomposition $D$.
\item Due to the action selection mechanism, this value is dependent on
$\left\{ V_{1}^{*},\dots,V_{K}^{*}\right\} $, the value functions
of the SPs. Since these are implied by $D$, we will leave this dependence
implicit.
\item Clearly, via the same mechanism, the value also depends on $\mathcal{B}$.
\item However, since the belief updating process of \pmaxq{}is different
from the Bayes' rule, $\mathcal{B}$ might not correspond to the \emph{true
}posterior distribution over states. Clearly, the \emph{true }value
will depend on such a true distribution. Therefore, we make the true
state distribution explicit and denote it with $\beta$.
\end{itemize}
We address the infinite-horizon setting, but for sake of argument,
let us suppose there is only $\tau=1$ step-to-go. In such a case,
the expected reward is given by,
\begin{eqnarray}
V_{D}^{pmq}(\beta,\mathcal{B}) & = & \sum_{s}\beta(s)R(s,Act^{pmq}(\mathcal{B}))\nonumber \\
\text{\{we only get reward from selected SP \ensuremath{\bar{k}}\}} & = & \sum_{s}\beta(s)R_{\bar{k}}(s_{\bar{k}},\bar{a})\nonumber \\
 & = & \sum_{s_{\bar{k}}}\beta(s_{\bar{k}})R_{\bar{k}}(s_{\bar{k}},\bar{a})\label{eq:R(Beta,B)}
\end{eqnarray}
We will denote this quantity with $R(\beta,\mathcal{B})$. Now, let
us consider the general case where there are $\tau>1$ steps-to-go.
The expected immediate reward will be the same as described before,
but we need to add an expected future reward term:
\begin{equation}\label{eqn:vpmq1}
V_{D}^{pmq}(\beta,\mathcal{B})=R(\beta,\mathcal{B})+\gamma\mathbf{E}\left[V_{D}^{pmq}(\beta',\mathcal{B}')\right]
\end{equation}
Here, the updated beliefs $\beta'$ and $\mathcal{B}'$ depend on the
taken action $\bar{a}=Act^{pmq}(\mathcal{B})$ and received observation
$o$: $\beta'=BU(\beta,\bar{a},o)$ and $\mathcal{B}'=BU^{pmq}(\mathcal{B},\bar{a},o)$.
This means that we can give an explicit formulation of the value function
as follows:
\begin{lem}
\label{lem:pmaxq-value}The value function for \pmaxq{}is given
by
\[
V_{D}^{pmq}(\beta,\mathcal{B})=R(\beta,\mathcal{B})+\gamma\sum_{o}\Pr(o|\beta,\bar{a})V_{D}^{pmq}(BU(\beta,\bar{a},o),BU^{pmq}(\mathcal{B},\bar{a},o))
\]
\end{lem}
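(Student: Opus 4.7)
The plan is to derive the stated recursion by directly unrolling the expectation in equation~\eqref{eqn:vpmq1}, which was established just before the lemma. The immediate-reward term $R(\beta,\mathcal{B})$ is carried over unchanged (it is defined via equation~\eqref{eq:R(Beta,B)}), so the only work is to rewrite $\mathbf{E}[V_D^{pmq}(\beta',\mathcal{B}')]$ as an explicit sum over next observations.

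First I would observe that, after $\bar{a}=Act^{pmq}(\mathcal{B})$ has been selected, the only source of randomness between the current stage and the next is the observation $o$ returned by the environment. Conditioned on $o$, both the next true distribution and the next set of local beliefs are deterministic: $\beta'=BU(\beta,\bar{a},o)$ and $\mathcal{B}'=BU^{pmq}(\mathcal{B},\bar{a},o)$, as already defined. Hence the expectation collapses to the finite sum $\sum_{o}\Pr(o\mid \beta,\bar{a})\,V_D^{pmq}(BU(\beta,\bar{a},o),\,BU^{pmq}(\mathcal{B},\bar{a},o))$, and substituting into~\eqref{eqn:vpmq1} gives the claim.

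The one subtle point to emphasize is that the observation probability is $\Pr(o\mid\beta,\bar{a})$ and not $\Pr(o\mid\mathcal{B},\bar{a})$. The local beliefs $b_k\in\mathcal{B}$ are used only to pick the action via $Act^{pmq}$; the physical environment generates $o$ according to the actual state distribution, which is $\beta$. Writing this out explicitly, $\Pr(o\mid\beta,\bar{a})=\sum_{s,s'}\beta(s)\,T(s,\bar{a},s')\,O(s',\bar{a},o)$. Since the $b_k$ may be inconsistent with $\beta$ (this is precisely what motivates distinguishing the two tracks), mixing them up here would be incorrect; flagging this explicitly is the main conceptual step.

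The main obstacle I anticipate is not technical but notational: one must keep straight that $\bar{a}$ depends only on $\mathcal{B}$ through $Act^{pmq}$, while the distribution over $o$ depends only on $\beta$. Once those dependencies are separated, the statement is just a definitional unrolling of the standard POMDP Bellman-style recursion adapted to the two-track (true distribution vs.\ local belief set) representation. Because $\gamma<1$ and rewards are bounded, the recursion is a contraction and therefore uniquely identifies $V_D^{pmq}$ as its fixed point, so no separate well-definedness argument is needed beyond what the surrounding discussion already provides.
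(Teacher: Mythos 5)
Your proposal is correct and follows essentially the same route as the paper, which simply substitutes the updated beliefs $\beta'$, $\mathcal{B}'$ into Eqn.~(4) and makes the observation probability explicit as a sum over $o$ weighted by $\Pr(o\mid\beta,\bar{a})$. Your added emphasis that the observation is generated according to the true distribution $\beta$ (not the local beliefs $\mathcal{B}$) is a correct and worthwhile clarification, but it does not change the argument.
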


\begin{proof}
We substitute for the updated beliefs $\beta', \mathcal{B}'$ in Eqn.~\ref{eqn:vpmq1} and make the
observation probability explicit.
\end{proof}
\noindent
As the presence of both $\beta,\mathcal{B}$ complicates the analysis,
we introduce criteria under which they coincide:
\begin{defn}\label{def:equal-beliefs}
We call $\beta,\mathcal{B}$ \emph{equivalent} if
\[
\forall_{s}\quad\beta(s)=\prod_{i=1}^{K}b_{k}(s_{k})
\]
\end{defn}
\begin{lem}
\label{lem:equal-beliefs-under-conditions} When the following conditions hold:
\begin{enumerate}
\item The decomposition $D=\left\{ \mathcal{M}_{1},\dots,\mathcal{M}_{K}\right\} $
is non-overlapping, meaning that no two SPs $\mathcal{M}_{i},\mathcal{M}_{j}$
contain the same (seller- or advisor-) state factors;
\item The true initial state distribution $\beta^{0}$ is factored along
the decomposition: $\beta^{0}(s)=\beta_{1}^{0}(s_{1}) \times \beta_{2}^{0}(s_{2}) \cdot \dots \times \beta_{K}^{0}(s_{K})$;%
\footnote{\textit{\emph{There could be state factors not covered by the decomposition,
but these can be ignored for purposes of this proof.}}%
}
\item Only actions and observations that are contained in one of the SPs
are executed/received;
\end{enumerate}

then \pmaxq{} maintains the correct Bayesian posterior. That is,
for all possible histories $h$ of actions and observations, the induced
beliefs by $BU$, $\beta^{h}$, are factored and therefore equivalent
to those induced by $BU^{pmq}$: $\beta^{h}\equiv\mathcal{B}^{h}$.

\end{lem}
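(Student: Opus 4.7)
\textbf{Proof plan for Lemma~\ref{lem:equal-beliefs-under-conditions}.}
The plan is to proceed by induction on the length of the history $h$, and at each step exploit the structural properties of the SALE POMDP that make its transition and observation models factor along a non-overlapping decomposition. Concretely, I would first record three factorization facts that follow from the SALE POMDP specification in Section~\ref{sec:single-pomdp}: (i) state factors of distinct agents evolve independently, so $P(s\mid s^-,a)=\prod_{k=1}^{K} P(s_k\mid s_k^-,a)$; (ii) whenever $a\in\mathcal{A}_j$, the factor $P(s_k\mid s_k^-,a)$ is the identity for every $k\neq j$, since query actions do not alter the state and buy/DNB only touch a transaction-status factor local to the SP that owns the selected agent; and (iii) whenever $o\in\mathcal{O}_j$, the likelihood $P(o\mid s,a)$ depends on $s$ only through $s_j$.

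The base case is immediate: by condition~$2$, $\beta^0(s)=\prod_k \beta^0_k(s_k)$, which coincides with $\mathcal{B}^0=\{\beta^0_1,\dots,\beta^0_K\}$ in the sense of Definition~\ref{def:equal-beliefs}. For the inductive step, assume $\beta^h(s)=\prod_k b^h_k(s_k)$. By condition~$3$, the executed action and received observation satisfy $a\in\mathcal{A}_j$ and $o\in\mathcal{O}_j$ for some $j$. Writing out the Bayesian global update
\[
\beta^{hao}(s)\ \propto\ P(o\mid s,a)\sum_{s^-} P(s\mid s^-,a)\,\beta^h(s^-),
\]
substituting the three factorizations, and pushing the sum inside the product decouples the expression into per-SP factors: for $k\neq j$ each sum collapses to $b^h_k(s_k)$ via the identity transition, while for $k=j$ it produces the standard single-SP one-step predictive update multiplied by the local likelihood $P(o\mid s_j,a)$. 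After renormalization one obtains
\[
\beta^{hao}(s)\ =\ BU(b^h_j,a,o)(s_j)\,\prod_{k\neq j} b^h_k(s_k)\ =\ \prod_k b^{hao}_k(s_k),
\]
which is exactly the output of $BU^{pmq}(\mathcal{B}^h,a,o)$ by its definition. This closes the induction and yields $\beta^h\equiv\mathcal{B}^h$ for every reachable history.

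The main obstacle is not the algebra of the induction itself but making the three factorization properties of the SALE POMDP fully precise, especially in light of the footnote that allows state factors not covered by the decomposition (for example a shared transaction-status factor). I would handle this by restricting the argument to the covered factors and invoking the non-overlap condition~$1$ to ensure that the covered seller/advisor factors split cleanly across the SPs, so that the product-of-marginals form is preserved exactly rather than merely approximately. Once those modelling assumptions are stated carefully, the inductive calculation goes through as sketched.
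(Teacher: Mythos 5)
Your proposal is correct and follows essentially the same route as the paper's proof: induction on the history, factorization of the transition and observation models along the non-overlapping decomposition, collapsing the sums for the non-owning SPs via the static (identity) transitions, and identifying the owning SP's factor with the local Bayesian update $BU(b_k,a,o)$. Your explicit handling of the uncovered transaction-status factor matches the paper's footnote, which simply declares such factors ignorable for the purposes of the argument.
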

\begin{proof}
Via induction. For $t=0$, \emph{$\mathcal{B}^{0}$} is initialized
as $\left\{ \beta_{1}^{0},\dots,\beta_{K}^{0}\right\} $ and via assumption
$2$, $\beta^{0}$ is factored. $\mathcal{B}^{0}$ and $\beta^{0}$ are
therefore equivalent. Now, we prove that given $\beta^{t}$ is factored, $\beta^{t},\mathcal{B}^{t}$
are equivalent, so are $\beta^{t+1},\mathcal{B}^{t+1}$. Assume an
arbitrary $a,o$ satisfying condition 3. Then,
\begin{eqnarray}
\forall_{s'}\qquad\beta^{t+1}(s') & = & BU(\beta^{t},a,o) \nonumber\\
 & = & \frac{O(o|a,s')\sum_{s}\beta^{t}(s)T(s'|s,a)}{\Pr(o|\beta^{t},a)} \label{eqn:bu}
\end{eqnarray}
Let $k$ denote the SP to which $a$ belongs. Since per definition
only actions that interact with the state factors in $k$ are contained
in $k$, and there is no overlap between state factors in different
SPs, we have that $O(o|a,s')=O_{k}(o|a,s'_{k})$. Similarly, we have
that $T(s'|s,a)=\prod_{l=1}^{K}T(s_{l}'|s_{l},a)$. Therefore, we can
rewrite the numerator of the belief update in  Eqn.~\ref{eqn:bu} as follows,
\begin{eqnarray*}
 &  & O_{k}(o|a,s'_{k})\sum_{s}\left[\beta_{1}^{t}(s_{1})\cdot\dots\cdot\beta_{K}^{t}(s_{K})\right]\prod_{l=1}^{K}T(s_{l}'|s_{l},a)\\
 & = & O_{k}(o|a,s'_{k})\sum_{s_{1}}\cdots\sum_{s_{K}}\prod_{l=1}^{K}\beta_{l}^{t}(s_{l})T(s_{l}'|s_{l},a)\\
 & = & O_{k}(o|a,s'_{k})\left(\sum_{s_{1}}\beta_{1}^{t}(s_{1})T(s_{1}'|s_{1},a)\right)\cdots\left(\sum_{s_{K}}\beta_{K}^{t}(s_{K})T(s_{K}'|s_{l},a)\right)\\
 & = & \left(O_{k}(o|a,s'_{k})\sum_{s_{k}}\beta_{k}^{t}(s_{k})T(s_{k}'|s_{k},a)\right)\prod_{l\neq k}\left(\sum_{s_{l}}\beta_{l}^{t}(s_{l})T(s_{l}'|s_{l},a)\right)
\end{eqnarray*}
Similarly, the denominator can be written as:
\begin{eqnarray*}
 &  & \sum_{s_{1}'}\cdots\sum_{s_{K}'}\left(O_{k}(o|a,s'_{k})\sum_{s_{k}}\beta_{k}^{t}(s_{k})T(s_{k}'|s_{k},a)\right)\prod_{l\neq k}\left(\sum_{s_{l}}\beta_{l}^{t}(s_{l})T(s_{l}'|s_{l},a)\right)\\
 & = & \sum_{s_{k}'}\left(O_{k}(o|a,s'_{k})\sum_{s_{k}}\beta_{k}^{t}(s_{k})T(s_{k}'|s_{k},a)\right)\prod_{l\neq k}\left(\sum_{s_{l}'}\sum_{s_{l}}\beta_{l}^{t}(s_{l})T(s_{l}'|s_{l},a)\right)\\
 & = & \sum_{s_{k}'}\left(O_{k}(o|a,s'_{k})\sum_{s_{k}}\beta_{k}^{t}(s_{k})T(s_{k}'|s_{k},a)\right)\prod_{l\neq k}1\\
 & = & \sum_{s_{k}'}\left(O_{k}(o|a,s'_{k})\sum_{s_{k}}\beta_{k}^{t}(s_{k})T(s_{k}'|s_{k},a)\right)
\end{eqnarray*}
Such that the entire $BU$ operation can be written as,
\begin{eqnarray*}
\beta^{t+1}(s') & = & \frac{\left(O_{k}(o|a,s'_{k})\sum_{s_{k}}\beta_{k}^{t}(s_{k})T(s_{k}'|s_{k},a)\right)\prod_{l\neq k}\left(\sum_{s_{l}}\beta_{l}^{t}(s_{l})T(s_{l}'|s_{l},a)\right)}{\sum_{s_{k}'}\left(O_{k}(o|a,s'_{k})\sum_{s_{k}}\beta_{k}^{t}(s_{k})T(s_{k}'|s_{k},a)\right)}\\
 & = & \frac{O_{k}(o|a,s'_{k})\sum_{s_{k}}\beta_{k}^{t}(s_{k})T(s_{k}'|s_{k},a)}{\sum_{s_{k}'}\left(O_{k}(o|a,s'_{k})\sum_{s_{k}}\beta_{k}^{t}(s_{k})T(s_{k}'|s_{k},a)\right)}\prod_{l\neq k}\left(\sum_{s_{l}}\beta_{l}^{t}(s_{l})T(s_{l}'|s_{l},a)\right)
\end{eqnarray*}
which shows that $\beta^{t+1}$ is factored. Moreover,
\[
\beta_{k}^{t+1}(s_{k}')=\frac{O_{k}(o|a,s'_{k})\sum_{s_{k}}\beta_{k}^{t}(s_{k})T(s_{k}'|s_{k},a)}{\sum_{s_{k}'}\left(O_{k}(o|a,s'_{k})\sum_{s_{k}}\beta_{k}^{t}(s_{k})T(s_{k}'|s_{k},a)\right)}\triangleq BU(b_{k},a,o),
\]
and---since $a$ is an action of SP $k$---this is exactly the component
that $BU^{pmq}$ maintains. Due to the fact that the transitions in
the SALE POMDP are static (states do not change)
\[
\forall_{l\neq k}\quad\beta_{l}^{t+1}(s_{l}')=\sum_{s_{l}}\beta_{l}^{t}(s_{l})T(s_{l}'|s_{l},a)=\beta_{l}^{t}(s_{l})
\overset{I.H.}{=} b_{l}^{t}(s_{l})
\]
Here, the last equality follows from the induction hypothesis. Since
$a\not\in\mathcal{A}_{l}$, $b_{l}^{t}(s_{l})$ is exactly the component
that $BU^{pmq}$ maintains. This shows that $\forall_{k}\quad\beta_{k}^{t+1}=b_{k}^{t+1},$
thus proving the lemma.
\end{proof}
This means that under stated assumptions, we can now simplify the
value of Parallel Max-Q:
\begin{cor}
\label{cor:pmq-value-simpler}Under the conditions of Lemma~\ref{lem:equal-beliefs-under-conditions},
the value function for \pmaxq{}can be simplified to:

\[
V_{D}^{pmq}(\mathcal{B})=R_{\bar{k}}(b_{\bar{k}},\bar{a})+\gamma\sum_{o\in\mathcal{O}_{\bar{k}}}O_{\bar{k}}(o|b_{\bar{k}},\bar{a})V_{D}^{pmq}(BU^{pmq}(\mathcal{B},\bar{a},o)),
\]
with
\[
R_{\bar{k}}(b_{\bar{k}},\bar{a})\triangleq\sum_{s_{\bar{k}}}b_{\bar{k}}(s_{\bar{k}})R_{\bar{k}}(s_{\bar{k}},\bar{a}),
\]
and
\[
O_{\bar{k}}(o|b_{\bar{k}},\bar{a})=\sum_{s_{\bar{k}}'}O_{\bar{k}}(o|\bar{a},s_{\bar{k}}')\sum_{s_{\bar{k}}}b_{\bar{k}}(s_{\bar{k}})T_{\bar{k}}(s_{\bar{k}}'|s_{\bar{k}},\bar{a})
\]
\end{cor}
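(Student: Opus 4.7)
The plan is to derive the simplified recursion by combining Lemma~\ref{lem:pmaxq-value} (which gives the general Bellman form in terms of both $\beta$ and $\mathcal{B}$) with Lemma~\ref{lem:equal-beliefs-under-conditions} (which, under the conditions, guarantees $\beta$ is factored and coincides with $\mathcal{B}$), and then exploiting the fact that the selected action $\bar{a}$ lives entirely inside the SP $\bar{k}$ to collapse every global sum over $s$ into a local sum over $s_{\bar{k}}$. Because the conditions of Lemma~\ref{lem:equal-beliefs-under-conditions} are assumed, I may write $\beta(s) = \prod_{l} b_{l}(s_{l})$ throughout, so the value function only depends on $\mathcal{B}$ and I can drop the $\beta$-argument on the left-hand side.

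First I would treat the immediate reward $R(\beta, \mathcal{B})$ from Eq.~\ref{eq:R(Beta,B)}. Since $\bar{a}\in\mathcal{A}_{\bar{k}}$, the reward only depends on the factors of $\bar{k}$, i.e., $R(s, \bar{a}) = R_{\bar{k}}(s_{\bar{k}}, \bar{a})$. Substituting the factored $\beta$ and marginalizing over all $s_{l}$ with $l\neq \bar{k}$ (each marginal sums to $1$) yields $\sum_{s_{\bar{k}}} b_{\bar{k}}(s_{\bar{k}}) R_{\bar{k}}(s_{\bar{k}}, \bar{a})$, which is exactly $R_{\bar{k}}(b_{\bar{k}},\bar{a})$ as defined in the statement. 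Next I would do the analogous reduction for $\Pr(o\mid \beta, \bar{a})$. Observation depends only on the $\bar{k}$-component: $O(o\mid \bar{a}, s') = O_{\bar{k}}(o\mid \bar{a}, s'_{\bar{k}})$; transitions satisfy $T(s'\mid s, \bar{a}) = T_{\bar{k}}(s'_{\bar{k}}\mid s_{\bar{k}}, \bar{a}) \prod_{l \neq \bar{k}} \mathbf{1}[s_l' = s_l]$ because non-$\bar{k}$ state factors are unaffected by $\bar{a}$ (as already exploited in the proof of Lemma~\ref{lem:equal-beliefs-under-conditions}). Plugging these into $\Pr(o\mid \beta, \bar{a}) = \sum_{s'} O(o\mid \bar{a}, s') \sum_s \beta(s) T(s'\mid s, \bar{a})$ and integrating out the $s_{l}, s'_{l}$ for $l\neq \bar{k}$ exactly reproduces the stated $O_{\bar{k}}(o\mid b_{\bar{k}}, \bar{a})$. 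Moreover, any $o \notin \mathcal{O}_{\bar{k}}$ has zero probability under this factored observation model, so the outer sum collapses from $o\in\mathcal{O}$ to $o\in\mathcal{O}_{\bar{k}}$.

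Finally, for the future value term $V_D^{pmq}(BU(\beta, \bar{a}, o), BU^{pmq}(\mathcal{B}, \bar{a}, o))$, I would invoke the inductive conclusion inside Lemma~\ref{lem:equal-beliefs-under-conditions}: applying $BU$ to a factored $\beta$ with an action–observation pair lying in a single SP preserves factorization and keeps $\beta$ equivalent to $\mathcal{B}$. Hence the posterior still satisfies the equivalence hypothesis, and the value can again be written with only the $\mathcal{B}$-argument, giving $V_D^{pmq}(BU^{pmq}(\mathcal{B}, \bar{a}, o))$. Combining the three simplifications (reward, observation probability, and recursive term) yields exactly the Bellman equation in the statement.

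The main obstacle I anticipate is not any single hard step but rather the careful bookkeeping needed to verify that marginalizing out the non-$\bar{k}$ factors is legitimate in every place — in particular, ensuring that the same static-transition and local-observation facts used in Lemma~\ref{lem:equal-beliefs-under-conditions} are reapplied uniformly in the reward sum, the observation likelihood, and the normalizer implicit in the recursion. Since those facts are already established, the corollary is essentially a substitution-plus-marginalization argument with no new machinery.
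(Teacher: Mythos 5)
Your proposal is correct and follows essentially the same route as the paper's own (much terser) proof: invoke Lemma~\ref{lem:equal-beliefs-under-conditions} to identify $\beta$ with the factored product of the $b_k$, then use the locality of $R$, $O$, and $T$ with respect to SP $\bar{k}$ to collapse the global sums in Lemma~\ref{lem:pmaxq-value} into local ones over $s_{\bar{k}}$, with the equivalence preserved under the update so the recursion closes in $\mathcal{B}$ alone. Your explicit marginalization of the non-$\bar{k}$ factors and the zero-probability justification for restricting the observation sum to $\mathcal{O}_{\bar{k}}$ simply spell out details the paper leaves implicit in its appeal to ``the structure of the SALE POMDP.''
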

\begin{proof}
The proof of Lemma~\ref{lem:equal-beliefs-under-conditions} showed
that, under stated conditions, $\mathcal{B}$ represents the true
posterior, such that $\beta(s_{k})=b_{k}(s_{k})$. Therefore, the definition
of $R(\beta,\mathcal{B})$ given by Eqn.~\ref{eq:R(Beta,B)} simplifies
to the above definition of $R_{\bar{k}}(b_{\bar{k}},\bar{a})$. Similarly,
the observation probability only depends on $s_{\bar{k}}$ (due to
the structure of the SALE POMDP) and thus $b_{\bar{k}}$.
\end{proof}

\paragraph{Lower Bound on the Value of Parallel Max-Q.}

Here, we analyze the performance of Parallel Max-Q, giving a lower bound
on its performance. In particular, we show that the $V_{D}^{pmq}$
value realized by \pmaxq{}for a given decomposition $D=\left\{ \mathcal{M}_{1},\dots,\mathcal{M}_{K}\right\} $
is at least as much as the optimal \se{}value $V_{k}^{*}$ that
would be realized by picking \emph{any} sub-POMDP $\mathcal{M}_{k}$.

In more detail, let us define, given $D$, a method `\bse{}' that
selects the best \se{}and sticks with it, then we can prove that
the value of \bse{} is a lower bound to that of Parallel Max-Q.
\begin{thm}
If the decomposition $D=\left\{ \mathcal{M}_{1},\dots,\mathcal{M}_{K}\right\} $
is non-overlapping, and the true initial state distribution $\beta^{0}$
is factored along the decomposition, then the value realized by \pmaxq{}is at least as much as the value of the \bse{}:
\[
V_{D}^{pmq}(\mathcal{B}^0)
\geq
\max_{k\in\left\{ 1,\dots,K\right\} }V_{k}^{*}(\beta_{k}^0)
\]
\end{thm}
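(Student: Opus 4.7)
The plan is to invoke Corollary~\ref{cor:pmq-value-simpler}, which---under the theorem's two hypotheses---makes $V_{D}^{pmq}$ satisfy a clean Bellman recursion in the selected SP's local reward, observation probabilities, and the successor factored belief sets. My core claim, from which the theorem follows by specialization at $\mathcal{B}^{0}$ (where $b_{k}^{0}=\beta_{k}^{0}$ by the factored-initial-belief assumption), is the stronger pointwise inequality
\[
V_{D}^{pmq}(\mathcal{B}) \;\geq\; \max_{k\in\{1,\dots,K\}} V_{k}^{*}(b_{k}) \;=\; V_{\bar{k}}^{*}(b_{\bar{k}})
\]
for every factored belief set $\mathcal{B}$ reachable under \pmaxq{}execution, with $\bar{k}$ as defined in Eq.~\ref{eq:pmaxq:selected_k}.

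First, I would unfold one \pmaxq{}Bellman step and line it up against the Bellman equation for $V_{\bar{k}}^{*}$. By Eq.~\ref{eq:pmaxq:selected_a}, the action $\bar{a}$ is precisely the greedy action of $V_{\bar{k}}^{*}$ at $b_{\bar{k}}$, so
\[
V_{\bar{k}}^{*}(b_{\bar{k}}) = R_{\bar{k}}(b_{\bar{k}},\bar{a}) + \gamma\sum_{o\in\mathcal{O}_{\bar{k}}} O_{\bar{k}}(o|b_{\bar{k}},\bar{a})\,V_{\bar{k}}^{*}(b_{\bar{k}}^{\bar{a}o}).
\]
Corollary~\ref{cor:pmq-value-simpler} gives the exact same immediate reward term and observation weights for $V_{D}^{pmq}(\mathcal{B})$, so after subtracting, establishing the pointwise claim at $\mathcal{B}$ reduces termwise to showing $V_{D}^{pmq}(BU^{pmq}(\mathcal{B},\bar{a},o)) \geq V_{\bar{k}}^{*}(b_{\bar{k}}^{\bar{a}o})$ for each reachable $o$. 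Lemma~\ref{lem:equal-beliefs-under-conditions} guarantees that the successor $\mathcal{B}^{\bar{a}o}$ is again factored, so the pointwise claim still applies there; invoking it recursively yields
\[
V_{D}^{pmq}(\mathcal{B}^{\bar{a}o}) \geq \max_{k} V_{k}^{*}(b_{k}^{\bar{a}o}) \geq V_{\bar{k}}^{*}(b_{\bar{k}}^{\bar{a}o}),
\]
which closes the step.

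To make the recursion rigorous in the infinite-horizon discounted setting, I would either prove a finite-horizon analogue by induction on the horizon $\tau$ using the one-step unfolding above (base case $\tau=0$ trivial) and pass to the limit using $\gamma<1$ and bounded rewards, or---equivalently---package Corollary~\ref{cor:pmq-value-simpler} as a $\gamma$-contraction operator $T^{pmq}$ on bounded functions over factored belief sets whose unique fixed point is $V_{D}^{pmq}$, then use monotonicity: iterating $T^{pmq}$ from the zero function dominates iterating the standard single-SP Bellman operator for any fixed $k$, because at every iterate the \pmaxq{}argmax over $k$ is no worse than any single choice.

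The main obstacle I anticipate is not the one-step algebra, which is routine, but the belief-bookkeeping across the recursion: the hypothesis invoked at the successor involves a possibly \emph{new} argmax $k'\neq\bar{k}$, and the transitive inequality $V_{k'}^{*}(b_{k'}^{\bar{a}o}) \geq V_{\bar{k}}^{*}(b_{\bar{k}}^{\bar{a}o})$ is only meaningful because each SP's local belief continues to equal the correct marginal of the true Bayesian posterior over its private state factors---exactly the invariant secured by Lemma~\ref{lem:equal-beliefs-under-conditions}. Without non-overlap and the factored initial belief, a local $V_{k}^{*}(b_{k})$ would no longer reliably represent the expected return under the true dynamics, and the dominance chain would break; thus the two hypotheses are not mere convenience, they are precisely what makes the argmax-of-SP-values upper bound a genuine lower bound on what \pmaxq{}actually collects.
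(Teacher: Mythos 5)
Your proposal is correct and follows essentially the same route as the paper's proof: strengthen to a pointwise inequality over equivalent factored beliefs, match the immediate reward and observation weights of the \pmaxq{} recursion (Corollary~\ref{cor:pmq-value-simpler}) against the Bellman equation of $V_{\bar{k}}^{*}$ at the greedy action $\bar{a}$, rely on Lemma~\ref{lem:equal-beliefs-under-conditions} to identify $b_{\bar{k}}$ with the true marginal, and close the recursion by induction on the horizon $\tau$ (the paper does exactly this finite-horizon induction, with the chain $\max_{k}V^{*}_{k,\tau}(b_k^{\bar a o})\geq V^{*}_{\bar k,\tau}(b_{\bar k}^{\bar a o})$ as the key step you also identify). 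Your explicit remarks on passing to the $\tau\to\infty$ limit and the contraction-operator alternative are slightly more careful than the paper's presentation but do not constitute a different argument.
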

\begin{proof}
    We will actually prove the stronger statement that says that the inequality holds for
    any two equivalent $\mathcal{B}, \{\beta_1,\dots,\beta_K \}$:
\[
V_{D}^{pmq}(\mathcal{B})\geq\max_{k\in\left\{ 1,\dots,K\right\} }V_{k}^{*}(\beta_{k})
\]
Let us first consider the right hand side. The maximizing value is
produced by a maximizing action ($\bar{a}$):
\begin{equation}\label{eq:maxk}
\max_{k\in\left\{ 1,\dots,K\right\} }V_{k}^{*}(\beta_{k})=\max_{a\in\mathcal{A}_{\bar{k}}}Q_{\bar{k}}^{*}(\beta_{\bar{k}},a)=Q_{\bar{k}}^{*}(\beta_{\bar{k}},\bar{a}),
\end{equation}
where,
\begin{equation}
\bar{k}=\arg\max_{k\in\left\{ 1,\dots,K\right\} }V_{k}^{*}(\beta_{k}),\label{eq:bar(k)}
\end{equation}
\begin{equation}
\bar{a}=\arg\max_{a\in\mathcal{A}_{\bar{k}}}Q_{\bar{k}}^{*}(\beta_{\bar{k}},a)\label{eq:bar(a)}
\end{equation}
and $\bar{k}$ denotes the maximizing SP. The (regular POMDP) optimal Q-value for
SP $\bar{k}$ is defined as:

\[
Q_{\bar{k}}^{*}(\beta_{\bar{k}},\bar{a})=R_{\bar{k}}(\beta_{\bar{k}},\bar{a})+\gamma\sum_{o\in\mathcal{O}_{\bar{k}}}\Pr(o|\beta_{\bar{k}},\bar{a})V_{\bar{k}}^{*}(BU(\beta_{\bar{k}},\bar{a},o))
\]
Since, neither \bse{} nor \pmaxq{} will select actions outside of
the decomposition $D$, Lemma~\ref{lem:equal-beliefs-under-conditions}
asserts that the components $b_{\bar{k}}$ maintained by \pmaxq{}
are identical to $\beta_{\bar{k}}$.
Thus,
\[
Q_{\bar{k}}^{*}(\beta_{\bar{k}},\bar{a})=Q_{\bar{k}}^{*}(b_{\bar{k}},\bar{a})=R_{\bar{k}}(b_{\bar{k}},a)+\gamma\sum_{o\in\mathcal{O}_{\bar{k}}}\Pr(o|b_{\bar{k}},\bar{a})V_{\bar{k}}^{*}(BU(b_{\bar{k}},a,o))
\]
and by Eqn.~\ref{eq:maxk},
\[
\max_{k\in\left\{ 1,\dots,K\right\} }V_{k}^{*}(\beta_{k}) = \max_{k\in\left\{ 1,\dots,K\right\} }V_{k}^{*}(b_{k})
\]
Now, we observe that \pmaxq{}also first selects the SP with the
highest current value (i.e., $\bar{k})$ and then executes the maximizing
action (i.e, $\bar{a}$) specified by SP $\bar{k}$. As such the only
difference is that the future values for \pmaxq{}are different,
since it might switch to another sub-problem at the next stage (cf.
Lemma \ref{cor:pmq-value-simpler}):
\[
V_{D}^{pmq}(\mathcal{B})=R_{\bar{k}}(b_{\bar{k}},\bar{a})+\gamma\sum_{o\in\mathcal{O}_{\bar{k}}}O_{\bar{k}}(o|b_{\bar{k}},\bar{a})V_{D}^{pmq}(BU^{pmq}(\mathcal{B},\bar{a},o))
\]
Clearly, this suggests a proof via induction. Let $V_{D,\tau}^{pmq}$
and $V_{k,\tau}^{*}$ denote the values of performing $\tau$ steps
of \pmaxq{}and \se$(k)$ respectively.

\subparagraph{Base Case.}

For $\tau=1$ step-to-go, the above analysis shows that both \pmaxq{}and \bse{} will realize $R_{\bar{k}}(b_{\bar{k}},\bar{a})$.

\subparagraph{Induction Step.}

The Induction Hypothesis is that
\[
\forall_{\mathcal{B}}\quad V_{D,\tau}^{pmq}(\mathcal{B})\geq\max_{k\in\left\{ 1,\dots,K\right\} }V_{k,\tau}^{*}(b_{k}),
\]
which means we have to prove that
\[
\forall_{\mathcal{B}}\quad V_{D,\tau+1}^{pmq}(\mathcal{B})\geq\max_{k\in\left\{ 1,\dots,K\right\} }V_{k,\tau+1}^{*}(b_{k}).
\]

Proof: Assume an arbitrary $\mathcal{B}$, again $\bar{k}$ denotes
the maximizing SP in \pmaxq{}and it specifies action $\bar{a}$.
We then have,
\begin{eqnarray*}
V_{D,\tau+1}^{pmq}(\mathcal{B}) & = & R_{\bar{k}}(b_{\bar{k}},\bar{a})+\gamma\sum_{o\in\mathcal{O}_{\bar{k}}}O_{\bar{k}}(o|b_{\bar{k}},\bar{a})V_{D}^{pmq}(BU^{pmq}(\mathcal{B},\bar{a},o))\\
\text{\{I.H.\}} & \geq & R_{\bar{k}}(b_{\bar{k}},\bar{a})+\gamma\sum_{o\in\mathcal{O}_{\bar{k}}}O_{\bar{k}}(o|b_{\bar{k}},\bar{a})\max_{k\in\left\{ 1,\dots,K\right\} }V_{k,\tau}^{*}(b_{k}^{\bar{a}o})\\
 & \geq & R_{\bar{k}}(b_{\bar{k}},\bar{a})+\gamma\sum_{o\in\mathcal{O}_{\bar{k}}}O_{\bar{k}}(o|b_{\bar{k}},\bar{a})V_{\bar{k},\tau}^{*}(b_{k}^{\bar{a}o})\\
 & = & Q_{\bar{k},\tau+1}^{*}(b_{\bar{k}},\bar{a})\\
\text{\{per def. \eqref{eq:bar(a)}, \ensuremath{\bar{a}\,\,}is maximizing in \ensuremath{\bar{k}\}}} & = & V_{\bar{k},\tau+1}^{*}(b_{\bar{k}})\\
\text{\{per def. \eqref{eq:bar(k)} \ensuremath{\bar{k\,\,}}is the maximizing SP\}} & = & \max_{k\in\left\{ 1,\dots,K\right\} }V_{k,\tau+1}^{*}(b_{k})
\end{eqnarray*}
which concludes the proof.

\end{proof}

\end{document}